\documentclass{article}

\usepackage[english]{babel}

\usepackage[margin=1.5in]{geometry}   

\usepackage{amsfonts}               
\usepackage{mathtools}              
\usepackage{amssymb}                
\usepackage{amsthm}                 
\usepackage[shortlabels]{enumitem}	

\usepackage{accents}                

\usepackage[bibstyle=authoryear, citestyle=authoryear, backend=biber, natbib]{biblatex}
\bibliography{references}

\newcommand{\ubar}[1]{\underaccent{\bar}{#1}}

\newtheorem{definition}{Definition}
\newtheorem{proposition}[definition]{Proposition}

\newtheorem{corollary}[definition]{Corollary}

\newcommand{\pred}{\mathsf{p}}
\newcommand{\Yhat}{\mathsf{\hat{Y}}}

\newcommand{\XX}{\mathcal{X}}
\newcommand{\DD}{\mathcal{D}}
\newcommand{\Acal}{\mathcal{A}}
\newcommand{\Ucal}{\mathcal{U}}

\newcommand{\ph}{p} 
\newcommand{\ip}{p^*}
\newcommand{\lp}{\ubar{p}}
\newcommand{\up}{\bar{p}}

\newcommand{\RR}{\mathbb{R}}
\newcommand{\EE}{\mathbb{E}}
\newcommand{\NN}{\mathbb{N}}

\title{A Unifying Perspective on Probabilities as Model Predictions}
\author{Benedikt Höltgen}
\date{\normalsize Hasso Plattner Institute, University of Potsdam}

\begin{document}

\maketitle

\begin{abstract}
Although probabilistic statements are ubiquitous, foundational disagreements persist about their understanding, as exemplified by debates between Bayesians and frequentists; moreover, it is unclear when and why acting on them actually leads to desirable outcomes. Here, we argue that every probability is the output of a \emph{prediction method}, that is, it depends on both a particular way of constructing abstractions and a way of transforming them into predictions. Through this, we provide a unifying perspective on supposedly different kinds of probabilities and show that even supposedly objective ones are model-dependent. We demonstrate that when a finite calibration criterion is met, one can anticipate the distribution of utilities for a given policy and inform successful decision-making on finite sets of events. Based on the notion of prediction methods, inductive arguments, and the probability calculus, we explain the feasibility of the calibration criterion in many settings. Overall, we develop a coherent perspective on probabilities and their use, connecting key intuitions behind other interpretations along the way.
\end{abstract}



\section{Introduction}
\label{s:intro}

\begin{quote}
    \raggedright
    Probabilities are as various as the faces to be seen at will in fretwork or paperhangings\\
    \raggedleft --- George \citet{eliot1871}: \textit{Middlemarch}
\end{quote}

We make probabilistic statements and use probabilistic reasoning all the time:
If the predicted `probability of rain' is sufficiently high, you bring your umbrella or even stay at home.
You decide to undergo surgery if this is thought to significantly `raise your chances' of recovery.
Given that such probabilistic statements permeate both science and our everyday lives, it is quite remarkable that it is still an open question what exactly we mean by them and how they are useful:
Do they refer to degrees of belief, to relative frequencies of repeated trials, or to physical properties?
The meaning of probability is considered to `bear at least indirectly, and sometimes directly, upon central scientific, social scientific, and philosophical concerns' \citep{hajekSEP}. 
In machine learning, the meaning of probability is increasingly recognised as a `pressing question' \citep{burhanpurkar2021};
as put by Cynthia Dwork, `without an answer to this definitional question, we don’t even know what it is that the ideal algorithm should satisfy' \citep{dwork2022}.  

A common view in both statistics and philosophy is that there are two kinds of probabilities, which one may refer to as aleatory and epistemic, respectively \citep[p. 13--15]{hacking1975}. 
Aleatory probabilities are grounded in the world, and potentially objective, for example in gambling or in other observable frequencies.
Epistemic probabilities are more speculative, linked to uncertain predictions and credences.\footnote{\citet{hacking1975} shows that this distinction is very old and can be found, e.g., in the distinction between \emph{chance} and \emph{probabilité} in the works of \citet{poisson1837} and \citet{cournot1843}.}
In philosophy, these concepts are often linked through the `Principal Principle', stating roughly that once a chance is learned, it should be adopted as credence.
Similarly, it is often assumed `that the job of statistics is to identify the data-generating process' \citep[160]{vovk2025}, or that in machine learning, `we would like to match the true data-generating distribution' \citep[130]{goodfellow2016}.

In contrast to these views, we develop a perspective that unifies these supposedly different kinds of probabilities, arguing that all probabilities are constructed and model-dependent.
In this work, we lay out a descriptive account of probability as predictions that are useful when they are finitely calibrated on certain sets.
We do not argue for a specific normative position about which predictions should be allowed to be called probabilities; however, our pragmatic perspective can shed new light on common notions of rational belief (abiding by the probability calculus) and decision-making (maximising expected utility).

The paper is structured as follows.
In Section~\ref{s:models}, we introduce the notion of prediction methods and show that they cover not only obvious examples like rain forecasts but also supposedly objective probabilities such as relative frequencies and gambling odds.
In Section~\ref{s:calibration}, we demonstrate how predictions that are finitely calibrated on relevant sets help us to make actually good decisions.
In Section~\ref{s:induction}, we elucidate why calibration is often feasible by drawing connections to the problem of induction and the probability calculus.
Lastly, we turn to to the literature on interpretations of probability and argue that our account satisfies general desiderata (Section~\ref{s:interpretation}) and captures key intuitions behind other interpretations (Section~\ref{s:comparison}).


\section{Behind Every Probability is a Prediction Method}
\label{s:models}

At the heart of our perspective on probability is the insight that each probability comes from a prediction method.
We first introduce the notion of prediction methods with an intuitive case and then walk through two perhaps less intuitive, seemingly aleatory examples.


\subsection{Predictors and prediction methods}
\label{ss:predictors}

All probability assignments involve predictions based on abstractions; the overall process of arriving at such a prediction we call a prediction method.
We distinguish the \textit{predictor}---the model that can be seen as a (mathematical) function---from the \textit{prediction method} that is applied to an actual situation.
The latter involves the construction of an abstraction (potentially including measurements) before applying the former (Figure~\ref{fig:pred_methods}). 
In the case of rain forecasts, the prediction method consists in first taking measurements (of temperatures, air pressure, etc.) and then feeding them to a computer model (the predictor) that outputs a prediction for the occurrence of rain.
%
\begin{figure}[h]
    \centering
    $ \left. \begin{array}{l} 
    \text{select predictor } \pred: \XX \times \Acal \to \RR \quad\\
    \ \\
    \text{construct abstraction } x \in \XX \quad
    \end{array} \right\} 
    \quad \text{compute } \pred(x,A)$
    \caption{A prediction method gives a prediction for an event $A$ in some situation by selecting a predictor $\pred: \XX \times \Acal \to \RR$ with $A \in \Acal$, constructing an abstraction $x \in \XX$ of the situation, and computing $\pred(x, A)$. For example, $\pred$ can be a computer model that predicts the event `rain' based on measurements $x$.}
    \label{fig:pred_methods}
\end{figure}

\begin{definition}[Predictor]\label{def:predictor}
    \ \\
    A \emph{predictor} is a function $\pred: \XX \times \Acal \to \RR$ on some set $\XX$ and algebra\footnote{An \textit{algebra} over a set $\Omega$ is a set of subsets of $\Omega$ that includes both $\Omega$ and the empty set and is closed under complements, finite unions, and finite intersections; this matters for Kolmogorov's axioms (Section~\ref{ss:axioms}).} $\Acal$.
\end{definition}
\begin{definition}[Prediction method]\label{def:pred_method}
    \ \\
    A \emph{prediction method} for an event $A \in \Acal$ is an implicit or explicit scheme for selecting a predictor $\pred: \XX \times \Acal \to \RR$ and constructing an abstraction $x \in \XX$ of the given situation.
\end{definition}
The predictor takes two arguments: the abstraction on which to base the prediction, and the event to predict.
In many settings, one of the two arguments is effectively ignored.
For rain forecasts, the algebra of events $\{\emptyset, \{rain\}, \{\neg rain\}, \{rain, \neg rain\}\}$ is only implicit.
By specifying a prediction $p_i$ for $\{rain\}$, we intuitively assign predictions $0, (1-p_i),$ and $1$ to the other events, respectively; we will turn to this in Section~\ref{ss:axioms}.
Note that the specification of events $A \in \Acal$ also involves choices of definition or measurement.
For example, how much rain counts as `no rain' or in which area it is recorded is more or less implicit---and depends on value judgements \citep{douglas2000}.
These choices can, however, be seen as external to the choice of prediction method (although the availability of methods can influence the choice of target), so we do not discuss them further.
Importantly, our notion of prediction does \emph{not} require that the predicted events lie in the future, it suffices that the observations/labels are not available to the predictor.

In the rain forecasting example, the abstraction made by the prediction method consists in taking specific measurements of temperature, air pressure, et cetera.
More generally, any sort of prediction requires focusing on a subset of all the information that could be taken into account---an abstraction of the situation.
Any given situation has an enormous amount of potentially relevant information; 
typically, we decide what to look at based on experience as well as common sense or expert knowledge.
For rain forecasts, temperature, and air pressure are more interesting quantities than the current GDP. 
Different models for rain prediction (the predictors) can also be based on different ways of measuring temperature---for example, different granularity, location, and timing of the measurements.
Different models can also work very differently---they may rely on simple look-up tables or sophisticated simulations.
They may even rely on human forecasters who also only use limited information for their forecasts.
While it is difficult to speak of human predictors as stable mathematical objects, they can arguably be approximated as such.

\subsection{Example: Symmetry-based predictions}
\label{ss:symmetry_based}

In many settings, probabilities are intuitively not thought to depend on modelling choices or a particular prediction method; this includes probabilities for gambling devices.
Assume you go to a casino where they offer a novel game based on a symmetrical 8-sided and a symmetrical 20-sided `die' (an octahedron and an icosahedron).
Given that it is an official casino, you assume that the dice are indeed symmetrical.
How do you make predictions?
You can represent any possible outcome that you wish to predict as the set of admissible combinations of faces, which can e.g. be represented as the algebra $\Acal = 2^{\{1,...,8\} \times \{1, ..., 20\}}$.
For the prediction, you presumably ignore the name of the croupier, the surface of the table, and so on, and only focus on the symmetry of the dice.
Each face of the octahedron corresponds to a prediction of $\frac{1}{8}$ and each face of the icosahedron corresponds to a prediction of $\frac{1}{20}$.
How exactly this reasoning is captured by a predictor $\pred: \XX \times \Acal \to \RR$ is under-determined; in particular, what counts as an argument versus as a part or parameter of the predictor:
You may consider a predictor that only takes symmetrical dice, such that the input space $\XX = \{x\}$ can be ignored.
Alternatively, you may take $\XX = \Delta_8 \times \Delta_{20}$ to be the set of all possible combinations of potentially biased 8-sided and 20-sided dice, of which you consider the element $x := (\frac{1}{8}, ..., \frac{1}{8}, \frac{1}{20}, ..., \frac{1}{20}) \in \XX$, reflecting your assumption of fairness.
You may also take a still larger $\XX$ that can also capture dice with other numbers of faces.
In any case, you proceed by transforming your abstraction of fair dice into a number in $[0,1]$ through a combinatorial model $\pred$.
You construct an abstraction and you calculate.

Such symmetry-based prediction methods in gambling situations typically assume that the device is `fair'.
Indeed, gambling devices are produced in such a way that each outcome should occur equally often, which allows us to make roughly accurate predictions about how often events occur on large samples.\footnote{\citet[4]{hacking1975} notes that already `[t]he dice in the cabinets of the Cairo Museum of Antiquity, which the guards kindly let me roll for a long afternoon, appear to be exquisitely well balanced.' It also appears that probabilistic calculations were already known to gamblers before mathematicians started to take an interest in it in the 17th century \citep{garber1979}.}
We know from experience that the process of rolling a fair die is so opaque and chaotic that it is practically impossible for us to predict better than uniformly.
If we were more proficient in discerning minuscule variations in die throws and background conditions (as Laplace's demon would be), we might be able to make finer predictions.
Indeed, Edward Thorp and Claude Shannon developed a device to better predict roulette outcomes which they successfully deployed in casinos in the 1960s \citep{thorp1998}.
After all, the assumption of a fair die or roulette wheel is a particular abstraction of (your beliefs about) the situation.

\subsection{Example: Frequency-based predictions}
\label{ss:frequency_based}

Predictions that are explicitly based on looking up relative frequencies of similar events use a very simple type of prediction method.
Such a method could be used for gambling instead of (or combined with) symmetry-based considerations.
There are also more interesting examples, such as medical risks.
Doctors typically base risk predictions on past experience, sometimes by explicitly looking up data about similar people.
In an example recently discussed in \citep{dawid2017}, which we shall return to later, Angelina Jolie got told that she had an $87\%$ risk of breast cancer---with the number presumably coming from statistical data about women with a particular genetic mutation.
Hence, the doctors used the following prediction method:
They chose a particular abstraction of Angelina, as a woman with this gene mutation, and then used a predictor that is basically a look-up table.
Presumably, women without that gene mutation get assigned into different categories (or `reference classes') for which there is enough data for doctors to believe that the relative frequency in this category is stable over time.
Different doctors may use different categories, that is, different abstractions.
As in the rain example, we can consider this as a case where the implicit 4-element algebra is ignored.
Alternatively, we could see it as an application of a more general predictor that can output predictions for different diseases, based on multiple look-up tables.
This would make $\Acal$ more complex by adding more diseases as fundamental events and means that $\XX$ needs to be fine enough that all relevant categories for all diseases can be distinguished.\footnote{One may also use a weaker set structure to not allow all intersections \citep{derr2023}.}


\section{Finite Calibration Makes Probabilities Useful}
\label{s:calibration}

The previous section argued that probabilities are outputs of prediction methods and thus constructed; this raises the question why we construct them and how they work.
Although it is often assumed that probabilistic predictions are useful for decision-making, it has not been demonstrated in general terms how or under which conditions this is the case.
In this section, we show how \textit{finite sets} of predictions are useful to us if they satisfy a form of calibration.
Although the technical details of the general perspective are almost trivial, it seems to not have been discussed before, let alone its relevance appreciated.
From this general understanding, the idea of expected utility maximisation and the more common understanding of calibration emerge as special cases.
For the purpose of this section, it is enough to think of predictions as arbitrary real numbers $p_i \in \RR$.\footnote{We will, however, demonstrate the benefits of satisfying Kolmogorov's axioms in later sections.}
As before, a prediction $p_i$ relates to an event $A_i$ and its label $y_i \in \{0,1\}$ where $y_i = 1$ or $y_i = 0$ denotes that the event does or does not occur, respectively.


\subsection{Predicting numbers of events}
\label{ss:result}

What is the difference between a prediction of $0.6$ and a prediction of $0.9$, given that the predicted event either does or does not occur?
An important difference surfaces when considering multiple predictions:
In general, of 100 events with prediction $0.6$, we intuitively expect roughly 60 to occur, whereas of 100 events with prediction $0.9$, we would expect roughly 90 to occur.
We can formalise this as a quality criterion for predictions called \textit{calibration}:
For a given set of events, the sum of our predictions should coincide with the number of occurring events.
\begin{definition}[Calibration]\label{def:calib}
    \ \\
    Predictions $p_1, ..., p_d \in \RR$ are said to be \emph{calibrated} for observations $y_1, ..., y_d \in \{0,1\}$ if they satisfy
    \begin{equation}
        \sum_{i=1}^d \ph_i = \sum_{i=1}^d y_i.
    \end{equation}
\end{definition}
%
Often, calibration is understood more narrowly as what \citet{dawid2017} calls `probability calibration', namely calibration \textit{on sets of equal prediction}.
While this unnecessarily narrow understanding of calibration may be partly due to historical reasons, as discussed in \citep{holtgen2023}, it is also particularly relevant in many settings (Section~\ref{ss:rain_example}).
Our definition is similar to the definition of calibration of \citet{dawid1985}, except that there, it is restricted to infinite sub-sequences of infinite sequences of outcomes whose averages are assumed to converge (which essentially presupposes the existence of objective, frequentist probabilities).
We argue that predicting how many events of certain sets will occur, i.e. calibration in the general sense, is the purpose for having probabilities in the first place.

The intuition for this is simple: 
If I knew that my predictions are calibrated on a set of events, then the sum of the predictions tells me how many of the set of events will occur.
This is very helpful, for example, in gambling settings: if I can reliably predict how often a repeatable event with given payoff will occur, then I know how much I should bet in each instance to come out positively in the end.
An important aspect here is that I do not care which of the events will occur, because the payout is always the same.
This is what calibration delivers: It tells you how many events will occur, without telling you which ones.

An implication of this is that calibration is only useful if I care equally about each event.
This notion of `caring equally' is captured in formal models by the condition that all events give me the same \emph{utility}.
Intuitively, a person's utility is a numerical representation of how much the person values the (non-)occurrence of an event (which is clearly an idealisation).
In our setting of binary events, we will use utility functions $u_i: \{0,1\} \mapsto \RR$ where $u_i(0)$ and $u_i(1)$ capture how much I value $y_i = 0$ and  $y_i = 1$, respectively.
Now, calibration can help us to foresee the (non-normalised) \emph{distribution of utilities} that I will receive:
If my predictions are calibrated on sets of equal utility, then I can predict the number of occurring events for each utility by summing up the relevant predictions.


\subsection{Predicting cumulative utility}
\label{ss:cumulative}

While we will come back to general utility distributions in Section~\ref{ss:calib_general}, we will for now focus on a particularly intuitive property of that distribution, which we call cumulative utility.

\begin{definition}[Cumulative utility]\label{def:utility}
    \ \\
    My \emph{cumulative utility} over a set of outcomes $\{y_1, ..., y_d\}$ given utility functions $u_1, ..., u_d$ is
    \begin{equation}
        \sum_{i=1}^d \left[ y_i u_i(1) + (1-y_i) u_i(0) \right].
    \end{equation}
\end{definition}
As $y_i$ denotes whether event $1$ or $0$ occurs, the cumulative utility is the \textit{sum over all utilities that I actually receive}.
This captures how well I will be off overall.
We now formally show that with suitably calibrated predictions, it is possible to estimate cumulative utility through a very familiar quantity.
The simple idea is that if for each utility value, I correctly predict how many events with this utility will occur, then I will correctly predict my cumulative utility.
Note that for our purposes, it would be mathematically equivalent to consider the average utility I get at each time step, i.e. the cumulative utility divided by $d$.
\begin{proposition}[Predicting cumulative utility]
    \label{prop:main_result}\ \\
    Let there be $d$ predictions $\ph_i \in \RR$ for binary outcomes $y_i \in \{0,1\}$, $i \in \{1,...,d\}$ with utility functions $u_i: \{0,1\} \to \RR$ and assume that the predictions are calibrated on sets of equal utility $u_i(0)$ and on sets of equal utility $u_i(1)$, formalised by the assumption that $\forall u' \in \Ucal:$
    \begin{equation}\label{eq:prop_assumption}
        \sum_{i: u_i(1) = u'} y_i = \sum_{i: u_i(1) = u'} \ph_i
        \quad\quad \text{and} \quad\quad
        \sum_{i: u_i(0) = u'} y_i = \sum_{i: u_i(0) = u'} \ph_i,
    \end{equation}
    where $\Ucal$ denotes the set of all values that the $u_i$ can take, i.e. $\Ucal := \bigcup_{1 \leq i \leq d} \{u_i(0), u_i(1)\} \subset \RR$.\\
    Then I can correctly predict my cumulative utility (LHS) via
    \begin{equation}\label{eq:main_result}
        \sum_{i=1}^d \left[ y_i u_i(1) + (1-y_i) u_i(0) \right] 
        = \sum_{i=1}^d \left[ \ph_i u_i(1) + (1-\ph_i) u_i(0) \right].
    \end{equation}
\end{proposition}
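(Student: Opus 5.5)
The plan is to split the cumulative utility into a part that depends on the outcomes linearly and a part that does not, and then use the calibration assumption~\eqref{eq:prop_assumption} once for each utility value. Rewrite each summand as
\begin{equation*}
y_i u_i(1) + (1-y_i) u_i(0) = u_i(0) + y_i u_i(1) - y_i u_i(0),
\end{equation*}
and likewise with $\ph_i$ in place of $y_i$. The terms $\sum_i u_i(0)$ coincide on both sides of~\eqref{eq:main_result}, so it suffices to prove the two identities
\begin{equation*}
\sum_{i=1}^d y_i u_i(1) = \sum_{i=1}^d \ph_i u_i(1)
\quad\text{and}\quad
\sum_{i=1}^d y_i u_i(0) = \sum_{i=1}^d \ph_i u_i(0).
\end{equation*}
Subtracting the second from the first and adding $\sum_i u_i(0)$ then gives~\eqref{eq:main_result}.

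For the first identity, partition the index set $\{1,\dots,d\}$ according to the value of $u_i(1)$. Since $\Ucal$ is finite and contains every value $u_i(1)$, we can write
\begin{equation*}
\sum_{i=1}^d y_i u_i(1) = \sum_{u'\in\Ucal} u' \sum_{i:\,u_i(1)=u'} y_i .
\end{equation*}
Applying the left half of~\eqref{eq:prop_assumption} to each inner sum replaces $y_i$ by $\ph_i$. Undoing the regrouping then yields $\sum_i \ph_i u_i(1)$. If some $u'\in\Ucal$ is never attained as a value $u_i(1)$, its inner sum is empty on both sides, so it contributes nothing. The second identity follows in exactly the same way by grouping according to $u_i(0)$ and using the right half of~\eqref{eq:prop_assumption}.

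There is no real obstacle here; the argument is elementary bookkeeping. The only point needing care is that the two calibration conditions are imposed on \emph{different} partitions of the indices, one by $u_i(1)$ and one by $u_i(0)$. The proof must therefore separate the $u_i(1)$-weighted and $u_i(0)$-weighted contributions before regrouping, rather than trying to group by the pair $(u_i(0),u_i(1))$, since no calibration is assumed on those finer sets. Working with the linear decomposition above makes this separation automatic. It also shows that no constraint such as $\ph_i\in[0,1]$ is needed, consistent with the predictions being arbitrary reals.
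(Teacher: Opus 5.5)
Your proof is correct and follows essentially the same route as the paper: both regroup the sum by utility value, separately for the $u_i(1)$-weighted and $u_i(0)$-weighted terms, and apply the calibration assumption within each group. The only difference is cosmetic. You first split off the constant $\sum_i u_i(0)$, which makes explicit the step $\sum_{i:u_i(0)=u'}(1-y_i)=\sum_{i:u_i(0)=u'}(1-\ph_i)$ that the paper uses without stating it.
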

\begin{proof}\ \\
    \begin{align}
        \sum_{i=1}^d \left[ y_i u_i(1) + (1-y_i) u_i(0) \right]
        &= \sum_{u' \in \Ucal} \left( \sum_{i: u_i(1) = u'} y_i + \sum_{i: u_i(0) = u'} (1-y_i) \right) \cdot u' \label{eq:util_calib1} \\
        &= \sum_{u' \in \Ucal} \left( \sum_{i: u_i(1) = u'} \ph_i + \sum_{i: u_i(0) = u'} (1-\ph_i) \right) \cdot u' \label{eq:util_calib2} \\
        &=\ \sum_{i=1}^d \left[ \ph_i u_i(1) + (1-\ph_i) u_i(0) \right] \label{eq:util:calib3}
    \end{align}
    where for (\ref{eq:util_calib1}) = (\ref{eq:util_calib2}), we use the calibration criterion (\ref{eq:prop_assumption}).
\end{proof}
Given that the assumption of \textit{exact} calibration on all sets of \textit{equal} utility is very strong, we also show that approximate calibration (Appendix~\ref{app:approx_calib}) and calibration on sets of approximately equal utility (Appendix~\ref{app:approx_utility}) suffice for approximately correct predictions of the cumulative utility.
Furthermore, a weaker calibration criterion for imprecise predictions makes it possible to incorporate risk aversion (Appendix~\ref{app:imprecise}).
Note that the RHS of (\ref{eq:main_result}) is the sum over my expected utilities (my expected cumulative utility), in the conventional probabilistic framework where $\Yhat_i$ is the random variable with distribution $P_i$ that takes value $1$ rather than $0$ with probability $\ph_i$:
\begin{equation}\label{eq:expected}
    \sum_{i=1}^d \left[ \ph_i u_i(1) + (1-\ph_i) u_i(0) \right]
    = \sum_{i=1}^d \EE_{P_i}[u_i(\Yhat_i)].
\end{equation}
This means that the policy of expected utility maximisation (EUM) is the policy that actually maximises my cumulative utility if my predictions are calibrated!
To capture the idea of maximising utility, we need a notion of decisions between acts, which is not yet part of the setup. 
For simplicity, we consider $d$ binary decisions, at step $i$ consisting in a choice between $(p_i^a, u_i^a)$ and $(p_i^b, u_i^b)$.

\begin{corollary}[Comparing policies by expected utility]
    \label{cor:comparing}\ \\
    For $i \in \{1,...,d\}$, let there be predictions $\ph_i^a, \ph_i^b \in \RR$ for binary outcome $y_i \in \{0,1\}$ and utility functions $u_i^b, u_i^b: \{0,1\} \to \RR$.
    For $\pi \in \{a,b\}$, policy $\pi$ is then given by predictions $p_1^\pi, ..., p_d^\pi$ and utility functions $u_1^\pi, ..., u_d^\pi$.
    Assume that the predictions of both policies are calibrated on sets of equal utility in the sense of (\ref{eq:prop_assumption}) for their respective utility functions.\\
    Then, for $\Yhat_i^\pi$ and $P_i^\pi$ as in (\ref{eq:expected}), the policy with the higher expected cumulative utility $\sum_{i=1}^d \EE_{P_i^\pi}[u_i^\pi(\Yhat_i^\pi)]$ will actually provide the higher cumulative utility.
\end{corollary}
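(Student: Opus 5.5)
The plan is to apply Proposition~\ref{prop:main_result} separately to each of the two policies and then compare the two resulting identities. For fixed $\pi \in \{a,b\}$, the data $p_1^\pi,\dots,p_d^\pi$, $u_1^\pi,\dots,u_d^\pi$ and the common outcomes $y_1,\dots,y_d$ satisfy exactly the hypotheses of Proposition~\ref{prop:main_result}: the predictions are real numbers, the outcomes are binary, and by assumption the calibration condition (\ref{eq:prop_assumption}) holds with $\Ucal$ replaced by $\Ucal^\pi := \bigcup_i \{u_i^\pi(0), u_i^\pi(1)\}$. Proposition~\ref{prop:main_result} then yields the identity (\ref{eq:main_result}) for policy $\pi$. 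Next, (\ref{eq:expected}) rewrites its right-hand side as $\sum_{i=1}^d \EE_{P_i^\pi}[u_i^\pi(\Yhat_i^\pi)]$. Hence, for each $\pi$,
\begin{equation*}
\sum_{i=1}^d \left[ y_i u_i^\pi(1) + (1-y_i) u_i^\pi(0) \right] = \sum_{i=1}^d \EE_{P_i^\pi}[u_i^\pi(\Yhat_i^\pi)].
\end{equation*}

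The conclusion follows immediately. The actual cumulative utility of each policy equals its expected cumulative utility. So if, say, the expected cumulative utility of $a$ exceeds that of $b$, the same strict inequality holds for the actual cumulative utilities. Equality of expectations likewise transfers to equality of actual cumulative utilities, so \lq\lq higher'' is preserved in both directions.

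There is essentially no obstacle. The only point needing care is notational: the statement writes $u_i^b, u_i^b$, which is presumably a typo for $u_i^a, u_i^b$. The calibration hypothesis also has to be read policy-wise, that is, each policy is calibrated on sets of equal utility defined by its own utility functions, with its own value set $\Ucal^\pi$. It is not calibration with respect to a shared partition. With that reading fixed, the proof is two invocations of Proposition~\ref{prop:main_result} followed by the equation (\ref{eq:expected}).
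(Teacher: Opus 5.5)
Your proposal is correct and takes the same route as the paper, which states the corollary without a separate proof as an immediate consequence of Proposition~\ref{prop:main_result} combined with (\ref{eq:expected}). Applying the proposition to each policy shows that its actual cumulative utility equals its expected cumulative utility, so the ordering of the two policies transfers; your reading of $u_i^b, u_i^b$ as a typo for $u_i^a, u_i^b$, and of the calibration hypothesis as holding for each policy with respect to its own utility functions, is the right one.
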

%
While EUM can be seen as a descriptive theory of human decision-making, it is often also assumed as a normative principle (e.g. by \citet{hedden2013}).
We can now give conditions under which this is a good policy in the sense that it leads to desired outcomes: when we are calibrated on sets of equal utility and when we care about maximising cumulative utility.
We now discuss a specific case of utility settings that has received much attention in statistics and machine learning, before we widen the scope again and consider cases where we are not interested in cumulative utility.


\subsection{Probability calibration}
\label{ss:rain_example}

We now illustrate the above with an example.
Let there be $d$ days where for $i \in \{1,...,d\}$, $\ph_i \in \{0, 0.1, 0.2, ..., 1\}$ is the daily rain forecast and $y_i \in \{0,1\}$ denotes whether it actually rains ($y_i=1$ denoting rain).
Now assume that across days, my attitude towards rain does not change but that it depends on whether I brought an umbrella:
Let my utilities be given by $u^a(1)=0$ and $u^a(0)=-1$ if I brought an umbrella and $u^b(1)=-3$ and $u^b(0)=0$ if I did not bring one.
Then my predicted utility when bringing an umbrella on day $i$ is 
\begin{equation}
    \ph_i \cdot u^a(1) + (1-\ph_i) \cdot u^a(0) = (1-\ph_i) \cdot (-1) = \ph_i - 1
\end{equation}
whereas my predicted utility when not bringing an umbrella is 
\begin{equation}
    \ph_i \cdot u^b(1) + (1-\ph_i) \cdot u^b(0) = -3 \ph_i.
\end{equation}
As $\ph_i - 1 > -3 \ph_i \Leftrightarrow \ph_i > 0.25$, I maximise predicted utility (per day) if I bring an umbrella on days where $\ph_i > 0.25$.
This policy induces the utility functions
\begin{equation}
    u_i = \begin{cases}
        u^a & \text{ if } \ph_i > 0.25 \\
        u^b & \text{ if } \ph_i < 0.25.
    \end{cases}
\end{equation}
The calibration criterion (\ref{eq:prop_assumption}) in Proposition~\ref{prop:main_result} for this case amounts to 
    \begin{equation} \label{eq:best_policy}
    \sum_{i: \ph_i < 0.25} y_i = \sum_{i: \ph_i < 0.25} \ph_i
    \quad\quad \text{and} \quad\quad
    \sum_{i: \ph_i > 0.25} y_i = \sum_{i: \ph_i > 0.25} \ph_i.
\end{equation}
If this condition is satisfied, my cumulative utility will coincide with the sum of my daily predicted utilities.
    
To assess the relative merits of this particular policy, we need to compare it with other policies.
(\ref{eq:best_policy}) is one instance of a prediction-dependent threshold policy, where I bring an umbrella whenever the predicted probability of rain is higher than a certain threshold (in this case, $0.25$).
For Proposition~\ref{prop:main_result} to apply to such a policy with any threshold $t \in [0,1]$, the calibration criterion is
    \begin{equation}
    \sum_{i: \ph_i < t} y_i = \sum_{i: \ph_i < t} \ph_i
    \quad\quad \text{and} \quad\quad
    \sum_{i: \ph_i > t} y_i = \sum_{i: \ph_i > t} \ph_i.
\end{equation}
Now note that since the possible utility functions are assumed to be the same each day, for this condition to be satisfied for all $t \in [0,1]$, it is enough to satisfy
\begin{equation}\label{eq:calib_prob}
    \forall v \in \{0, 0.1, 0.2, ..., 1\}:
    \sum_{i: \ph_i = v} y_i = \sum_{i: \ph_i = v} \ph_i.
\end{equation}  
Now this is just the common condition of calibration on sets of equal prediction, which is commonly expected of rain forecasters \citep{gigerenzer2005} and which `even inexperienced forecasters are capable of displaying', except for extreme predictions \citep[p. 191]{sanders1963}; see also \citet{murphy1977}.
Under this fairly benign assumption, choosing $0.25$ as my threshold maximises not only my \textit{predicted} utility but also my \textit{actual} cumulative utility among all threshold-based policies due to Proposition~\ref{prop:main_result}!
Hence, people can tailor their policies to their personal utilities and, thus, their decisions to the forecasts.
This also demonstrates why probabilistic forecasts are useful even in a deterministic world without `real' probabilities (cf. Section~\ref{ss:determinism}).
We would like to highlight that calibration on sets of equal prediction thus derives its importance (and prevalence) from their concurrence with the sets of equal utility when considering threshold-based policies.

This observation is closely linked to the connection between probability calibration and swap-regret that was shown for the first time by \citet{foster1998}.
In this work, a randomised forecasting algorithm which minimises the maximum regret under a permutation of predictions (the maximum swap regret) was used to achieve low calibration error; this was used to show that one can always achieve probability-calibrated forecasts, at least in probability.
While the importance of the reverse direction has since been appreciated \citep{noarov2024}, the benefit of calibration is commonly only framed in terms of regret; in our view, the connection to swap regret should be seen as a corollary of the more fundamental function of probability \emph{per se}, the accurate prediction of numbers of occurring events.
Closest to ours is the perspective of \citet{zhao2021} which focusses on the predictability of average loss, but is restricted to loss functions which depend only on prediction and outcome, not on the more general utility of the event.

Note that the calibration criterion was fairly benign in our rain example because the utilities are the same every day and we restricted our comparisons to the $10$ threshold-based policies (arguably the only sensible policies here).
The story would be more complex if we took the utility to also depend e.g. on wind speed (because it affects the efficacy of umbrellas)  or on the day of the week.
In general, for $d$ binary decisions, there are $2^d$ possible combinations of decisions, i.e. policies!
Accordingly, if we wanted to compare the cumulative utility of all possible choices via Proposition~\ref{prop:main_result}, this would lead to a very strong calibration criterion---in fact, it would require perfect binary predictions $p_i=y_i$.
This should not be surprising, as the best combination of decisions would be to always bring an umbrella if and only if it rains, which we can only ensure if we can discriminate perfectly between rainy and dry days.

It is, therefore, important to emphasise that calibration on sets of equal prediction should not be the only quality criterion for predictions.
Consider the rain example above: While the constant base rate predictor also satisfies the calibration criterion (\ref{eq:calib_prob}), more refined predictions would allow people to better tailor their decisions to their utilities.
Another property of interest is, thus, what is sometimes called sharpness or refinement, relating to the information content of a predictor \citep{degroot1983}.
The Brier score, to take a criterion important in rain forecasting, can be decomposed into two terms measuring probability calibration and sharpness, respectively \citep{sanders1963}.
These two properties are in tension in the sense that it is more difficult to be calibrated on more informative predictions.
While the focus of this work is how predictions can be useful in general, rather than their evaluation, we presently also mention some connections to the latter.


\subsection{Calibration revisited}
\label{ss:calib_general}

We now briefly consider alternatives to the cumulative utility as the quantity of interest.
Recall from Section~\ref{ss:result} that predictions calibrated on sets of equal utility not only allow us to predict the cumulative (or average) utility but the \textit{distribution} of utilities more generally.
Let $\DD := \{ \mu: \RR \to \NN_{\geq 0} \}$ denote the space of utility distributions, i.e. functions indicating how often different utility values occur.\footnote{For $u \in \RR$, $\mu(u) = n$ then means that the utility $u$ occurs $n$ times in the distribution described by $\mu$. Instead of such distributions, one may also think of multisets. Note that calibration on sets of equal utility generally only allows us to predict \textit{how often}, but not \textit{when} a specified utility will be received.\label{fn:utility_dist}}
Let $\Ucal_\mu := \{u \in \RR: \mu(u) > 0\}$ denote the support of a utility distribution $\mu \in \DD$, i.e. the set of utility values that do occur in the distribution $\mu$.
With this notation, we can describe the cumulative utility of a distribution $\mu \in \DD$ as $\sum_{u \in \Ucal_\mu} u \cdot \mu(u)$.
Another potentially relevant property of utility distributions is the smallest received utility, $\min \Ucal_\mu$.
For the umbrella policies, optimising for this property would mean that we should always bring an umbrella when $p_i > 0$, as we will otherwise incur a utility of $-3$ at some point---assuming that the calibration condition (\ref{eq:calib_prob}) holds.
The minimum is quite an extreme property of a distribution as it ignores most information about the distribution (in our case, all events except those with the lowest utility).
Optimising other properties of utility distributions will lead to other decision criteria but these questions are not the focus of this paper, interesting as they are.



In concurrent work, \citet{perdomo2025recht} claim that `[t]he utility of calibration comes in terms of communication' (p. 15) and
`emphasize that beyond [the] property of shared interpretation, calibration doesn’t mean much’ (p. 18).
Against this view, we highlight three interrelated perspectives on the importance of calibration.
First, the equivalence of swap regret and calibration highlights that the `best-response action' which maximises EU will fare better than the reverse policy. 
In the rain case, even just matching the base rate would already allow to always take the action compatible with the more probable event---which is not impressive but better than the opposite.
Arguably, this is only one part of a second, broader perspective on the choice of predictor given some policy and utility/loss:
Calibrated predictors guarantee better decision outcomes than miscalibrated predictors with the same discriminative power.
This perspective is explored further in many works connecting calibration to loss minimisation \citep{kleinberg2023, gopalan2023, feng2025, derr2025}.
The third perspective is closer to our basic idea of predicting utility distributions: If we assume calibration on relevant sets, we can choose among policies based on the utility distributions they will, respectively, lead to.
\citet{holtgen2025causal} demonstrate how this perspective can be fruitfully applied to causal inference settings.
These perspectives are different sides of the same gambling device; combining the latter two, one may suggest choosing policy and predictor together, based on the promised utility distribution and on how realistic it is that the required calibration conditions are met.
This highlights a fundamental question: How can we say anything about whether a set of predictions will be calibrated?


\section{How Is Calibration Possible?}
\label{s:induction}

Even if we showed calibration can make predictions useful, this only helps with understanding probability in the case that is also realistic to achieve calibration.
For arbitrary sets of predictions, there need not be a reason to assume that such a criterion would be satisfied.
This points to the importance of considering the methods that generated the predictions, which we discussed in Section~\ref{s:models}.
Based on the notion of calibration for predictions $p_1, ..., p_d$ as defined above, we can define calibration for predictors and prediction methods. 
For this, we consider the predictions of $d$ events $A_1,...,A_d \in \Acal$, with a prediction method that uses abstractions $x_1, ..., x_d \in \XX$.
\begin{definition}[Calibration of predictors and prediction methods]\label{def:calib_method}
    \ \\
    A predictor (or prediction method) is said to be calibrated on events $A_1,...,A_d \in \Acal$ for observations $y_1, ..., y_d$ if its predictions $p_1, ..., p_d$ are calibrated.
\end{definition}
%


Prediction methods provide a first way to draw connections between individual predictions, which is necessary to even start talking about satisfying criteria on \textit{sets} of events---but how can we hope for calibration on \textit{unseen} sets of events?
A first idea of providing calibrated predictors may be to directly optimise for it, in what has been called defensive forecasting \citep{vovk2005} or forecast hedging \citep{foster2021}.
This literature emerged in response to \citet{schervish1985} showing that calibration in the classical sense cannot be guaranteed by any algorithm.
\citet{foster1998} showed that, under very mild assumptions, probability calibration in the sense of a low ECE can be guaranteed (in probability) by a stochastic algorithm that minimises swap regret (for the Brier score).
However, such backward-looking algorithms often simply converge to (more or less stable) base rates, without refined predictions that allow well-informed policies.
While they can be adapted to smaller patches of the input space (see also the literature on multi-calibration \citep{hebert2018}), this is not much better than simply predicting the average per patch, thereby deciding in advance which patches get individual decisions.
In this, such approaches are more constrained than, e.g., human weather forecasters who were trained to first sort similar weather situations into `categories of likelihood of occurrence' and then predict a calibrated forecast per category \citep[200]{sanders1963}.
%






This example and those of Section~\ref{s:models} may suggest that induction about calibration always reduces to stable relative frequencies of repeated trials; this is also not the case.
Consider simulation models in the rain prediction example:
If the measurements are fine enough, it may be the case that no input $x_i \in \XX$ occurs more than once and no prediction is issued more than once, implying that there are no repeated trials.
Further examples are given by logistic regression or more complex Machine Learning models used for probabilistic predictions.
In general, calibrated predictors may rely on some structure in the relationship between inputs and labels that does not reduce to stable relative frequencies.
Otherwise, simulation-based and ML-based predictions could simply be replaced by `reference class forecasting' \citep{flyvbjerg2004}.
%

Empirically, we do see that prediction methods can often be designed such that they are (approximately) calibrated on sets of interest, which simply means that they neither systematically over- nor systematically under-predict.
Besides rain forecasts and the examples in Sections~\ref{ss:symmetry_based} and~\ref{ss:frequency_based}, we can point to machine learning (ML) models which often aim for calibration on sets of equal prediction.
It has been observed that especially modern, over-parameterised ML models need explicit post-processing, whereas others are automatically calibrated on sets of equal prediction \citep{guo2017}. 
One could argue that on a high level, humans also do something like this post-processing: if we are repeatedly over- or under-predicting (i.e. are not calibrated) on sets of interest, we will ideally notice that; since we do not know on which of the individual events our predictions were too low/high, we systematically increase/decrease our predictions in similar situations\footnote{Choosing a sensible notion of similarity here also requires experience. Nelson \citet[18]{goodman1972} already suspected `that rather than similarity providing any guidelines for inductive practice, inductive practice may provide the basis for some canons of similarity'.} in the future.
But why should future predictions then still be calibrated?

\subsection{Induction and the feasibility of calibration}
\label{ss:induction}


Any prediction method, indeed any prediction about the future, relies on an inductive assumption: that the future will resemble the past in some relevant way.
This relevance can be made more precise for our purpose: that a prediction method which has repeatedly proven to be (approximately) calibrated on some sets in the past will be (approximately) calibrated on similar sets in the future.
Below, we prove a formal result in support of this particular inductive assumption, similar to the argument for induction made by \citet{williams1947}.
In contrast to the cited work, we are dealing not only with integers but with real numbers, which is why we draw on an established concentration inequality.\footnote{Also note that we do not mean to provide an a priori justification for induction here: our goal is not to solve any (old or new) riddle of induction but to elucidate the role of probability in it.}
In particular, we make use of the combinatorial bound provided by Hoeffding's inequality for drawing without replacement.
\begin{proposition}[Calibration on samples from a population]\label{prop:induction}
    \ \\
    Take a predictor $\pred : \XX \times \Acal \to [0,1]$ and $N$ prediction instances represented by $(x_i, A_i) \in \XX \times \Acal$.
    Now consider drawing a `sample' of $d$ instances from the `population' of $N$ instances.
    Then the samples $\{j_1, ..., j_d\} \subset \{1,...,N\}$ whose average calibration error differs by more than $\epsilon$ from the average calibration error of the population, i.e. where
    \begin{equation}
        \frac{1}{d} \sum_{i=1}^d \left( p_{j_i} - y_{j_i} \right) 
        - \frac{1}{N} \sum_{i=1}^N \left( p_i - y_i \right)  
        \geq \epsilon,
    \end{equation}
    ($y_i$ denoting whether $A_i$ occurs and $p_i := \pred(x_i, A_i)$) make up for less than $\exp\left(-\frac{1}{2}d\epsilon^2\right)$ of all possible samples of that size.
\end{proposition}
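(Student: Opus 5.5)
We reduce the statement to Hoeffding's inequality for sampling without replacement, reading ``fraction of all possible samples'' as the probability under a uniformly random draw of a $d$-subset.

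\emph{Setup.} For each instance $i \in \{1,\dots,N\}$ define
\[
z_i := p_i - y_i, \qquad \mu := \frac{1}{N}\sum_{i=1}^N z_i .
\]
Since $\pred$ maps into $[0,1]$ and $y_i \in \{0,1\}$, each $z_i$ lies in $[-1,1]$. This is an interval of length $2$, and that range is what enters the constant.

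\emph{Random sample.} Place the uniform distribution on the $\binom{N}{d}$ subsets $\{j_1,\dots,j_d\}$. The fraction of subsets with a given property is then exactly its probability. Equivalently, draw $Z_1,\dots,Z_d$ uniformly without replacement from the finite population $\{z_1,\dots,z_N\}$ (treated as a multiset). Ordering does not matter because the event depends only on the sample mean. Each $Z_k$ has mean $\mu$, so the event in the statement is
\[
\frac{1}{d}\sum_{k=1}^d Z_k - \mu \ge \epsilon .
\]

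\emph{Hoeffding step.} Hoeffding (1963, Section~6) shows that for sampling without replacement, $\EE f(\sum_k Z_k) \le \EE f(\sum_k W_k)$ for every continuous convex $f$, where the $W_k$ are i.i.d.\ draws \emph{with} replacement. Applying this with $f(s) = e^{\lambda s}$ transfers the moment-generating-function bound from the i.i.d.\ case. Hoeffding's lemma for variables in an interval of length $2$ gives $\EE e^{\lambda(W-\mu)} \le e^{\lambda^2/2}$. A Chernoff bound, optimised at $\lambda = \epsilon$, then yields
\[
\Pr\!\left[\frac{1}{d}\sum_{k=1}^d Z_k - \mu \ge \epsilon\right] \le \exp\!\left(-\frac{2d\epsilon^2}{(b-a)^2}\right) = \exp\!\left(-\tfrac{1}{2}d\epsilon^2\right),
\]
with $b-a=2$. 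This is the claimed bound on the proportion of samples.

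\emph{Main obstacle.} The delicate points are bookkeeping rather than analysis. First, the range of $z_i$ must be taken as $[-1,1]$, not $[0,1]$; otherwise the constant would come out as $2$ instead of $\tfrac12$. Second, the without-replacement comparison has to be invoked correctly, citing Hoeffding's Theorem~4 rather than reproving it. Third, the statement says ``less than'' while Hoeffding gives ``at most''. I would either note that the bound is non-strict in general, or observe that strict inequality follows because Hoeffding's lemma is strict unless the $z_i$ are degenerate. In the degenerate case with $\epsilon>0$, the event is empty and strictness holds trivially.
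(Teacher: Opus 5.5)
Your proposal is correct and is essentially the paper's proof. The paper likewise sets $z_i := p_i - y_i \in [-1,1]$ and applies Hoeffding's inequality for sampling without replacement, citing Proposition~1.2 of Bardenet and Maillard directly with $a=-1$, $b=1$, so $(b-a)^2=4$; you instead unpack that inequality via Hoeffding's convex-domination theorem, Hoeffding's lemma and a Chernoff bound at $\lambda=\epsilon$, and your strictness remark is valid (indeed Hoeffding's lemma is strict for every $\lambda\neq 0$, so no degenerate case is needed), whereas the paper silently uses the non-strict bound.
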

\begin{proof}\ \\   
    Our result follows directly from Hoeffding's inequality for drawing without replacement.
    We simply insert $z_i := p_i - y_i$, $a=-1$ and $b=+1$ in the below statement taken from Proposition 1.2 of \citet{bardenet2015}:
    
    Let $Z = (z_1, ..., z_N)$ be a finite population of $N$ points and $Z_1, ..., Z_d$ be a random sample drawn without replacement from $Z$. Let
    \begin{equation}
        a := \min_{1 \leq i \leq N} z_i
        \quad \text{and} \quad
        b := \max_{1 \leq i \leq N} z_i.
    \end{equation}
    Then for all $\epsilon > 0$,
    \begin{equation}
        \mu \left[ \frac{1}{d} \sum_{i=1}^d Z_i - \frac{1}{N} \sum_{i=1}^N z_i \geq \epsilon \right]
        \leq \exp\left( - \frac{2d\epsilon^2}{(b-a)^2} \right),
    \end{equation}
    where $\mu$ measures the proportion of admissible combinations in drawing $d$ of the $N$ points.
\end{proof}
Hence, the average calibration error on large enough samples will mostly be close to the average calibration error of the whole population.
In a move analogous to that of \citet{williams1947}, we can also infer that if I am approximately calibrated on a large enough sample from a population or set of prediction instances, I will in most cases also be calibrated on the whole set and, thus, on similar sets in the future.
Let us illustrate the bound with concrete numbers.
If I have an average calibration error of $0.2$ on the whole population, then I will get a calibration error of less than $0.05$ in less than $10\%$ of possible samples of size $d=200$; for $d=500$, this ratio goes down to $0.4\%$. 
Hence, the vast majority of possible samples will not mislead me into thinking that I will be well-calibrated in the future in such a setting.
Note that the proposition only provides an upper bound, so that the actual number of non-representative samples will be lower still.
While this result does not prove the possibility of induction, it shows that calibration in the past is an indicator for calibration in the future---on sets that can be thought to be drawn from the same population.
Whether this is a sensible model in a given situation depends on whether there is reason to believe that the sample is unbiased.\footnote{An illuminating analysis of the difference between the means in terms of the bias of the sampling procedure is given by \citet{meng2018}.}

Another interesting implication of the result concerns the mixing of predictions from multiple, different calibrated prediction methods.
If $n$ prediction methods are calibrated on $d$ events each, then the resulting $n \cdot d$ predictions are clearly also calibrated on the $n \cdot d$ events;
Proposition~\ref{prop:induction} can also be applied to this larger set of predictions, now inferring from the population to subsets:
It shows that most large enough subsets of these $n \cdot d$ predictions will also be approximately calibrated, even though they come from a mix of prediction methods.
This is important because it shows that the Proposition is not only relevant for predictions from the same prediction method.
In sum, we can give arguments why, in certain cases, we expect to be calibrated on future events---but we can never be sure:
`Nature will always maintain her rights, and prevail in the end over any abstract reasoning whatsoever’ \citep[5.1.2]{humeEnquiry}.


\subsection{Extrapolating calibration}
\label{ss:axioms}

Another way of generating sets of calibrated predictions is through certain other sets of calibrated prediction---by using probabilistic reasoning.
Even attentive readers probably missed the interesting fact in Proposition~\ref{prop:main_result} that $1-p_i$ automatically emerged as the prediction for $1-y_i$, without imposing Kolmogorov's axioms.
We now show more generally that predictors need to satisfy these axioms (in their second argument) in order to be calibrated on certain sets.\footnote{
Related observations for the case of sets of equal prediction have been made by \citet{vanfraassen1983}.}
Take a predictor $\pred : \XX \times \Acal \to \RR$ and $d$ prediction instances represented by $(x_i, A_i) \in \XX \times \Acal$ with $p_i := \pred(x_i, A_i)$ and let $y_i \in \{0,1\}$ denote whether $A_i$ occurs.
%
Let $y_A, y_B, y_{A \cup B}, y_\Omega \in \{0,1\}$ denote whether events $A, B, A \cup B, \Omega \in \Acal$ occur at the last instance $d$.
Let $\Acal$ be an algebra over some set $\Omega$ where $\Omega$ is a sure event:
It exhausts all possibilities; that is, for its label, it is known that $y_\Omega=1$.
\begin{enumerate}
    \item \emph{Non-negativity:}
    If there is a nontrivial subset $I := \{i : p_i < 0\} \subset \{1,...,d\}$ where $\pred$ predicts negative values, then $\pred$ cannot be calibrated on this subset, regardless of whether the predicted events occur:
    \begin{equation}
        \sum_{i \in I} p_i < 0 \leq \sum_{i \in I} y_i.
    \end{equation}

    \item \emph{Normalisation:}
    Let $A_d = \Omega$ and $\pred$ be calibrated on the set $\{1,...,d-1\}$.
    Then $\pred$ is calibrated on $\{1,...,d\}$ if and only if $\pred(x_d, \Omega) = 1$, regardless of $x_d$.

    \item \emph{Additivity:}
    Let $A, B \in \Acal$ be disjoint events in the sense that $y_A + y_B \leq 1$ (i.e. it cannot be that both labels are equal to $1$ at the same instance); this implies $y_A + y_B = y_{A \cup B}$.
    Now assume $\pred$ is calibrated on the set $S := \{(x_1, A_1),...,(x_{d-1}, A_{d-1}), (x_d, A), (x_d, B)\}$, where $\pred$ is used for two predictions at instance $d$. Then
    \begin{align}
        \pred(x_d, A \cup B) + \sum_{i=1}^{d-1} p_i
        - y_{A \cup B} - \sum_{C \in J} y_i
        =\ &\pred(x_d, A \cup B) + \sum_{i=1}^{d-1} p_i
        - y_{A} - y_{B} - \sum_{i=1}^{d-1} y_i \\
        =\ &\pred(x_d, A \cup B)- \pred(x_d, A) - \pred(x_d, B) \\
        &+ \pred(x_d, A) + \pred(x_d, B) + \sum_{i=1}^{d-1} p_i
        - y_{A} - y_{B} - \sum_{i=1}^{d-1} y_i \nonumber \\
        =\ &\pred(x_d, A \cup B) - \pred(x_d, A) - \pred(x_d, B) \label{eq:add_last}
    \end{align}
    where the last step uses the assumption of calibration on $S$.
    So under that assumption, $\pred$ is calibrated on $\{1, ..., d\}$ with $A_d= A \cup B$ if and only if $\pred(x_d, A \cup B) = \pred(x_d, A) + \pred(x_d, B)$, regardless of $x_d$.
\end{enumerate}
The sets that allow if-and-only-if statements are quite specific here; in this sense, it resembles Dutch book arguments, where any single inconsistency can in theory be exploited indefinitely.
Here, however, the implications are more practical: If someone is perfectly calibrated on forecasting `rain' but does not obey the probability axioms on one `no rain' forecast, then for some utility functions (in the setting of Section~\ref{ss:rain_example}), the best-response policy is guaranteed to lead to sub-optimal decisions due to miscalibration.

We can also motivate the definition of conditional probabilities by the demand for calibration (somewhat analogous to definitions via relative frequencies).
Consider the task of predicting events $A, B \in \Acal$ at $d$ instances.
For ease of presentation, assume that all $d$ inputs coincide, i.e. $x_1 = ..., x_d = x \in \XX$. 
This allows us to drop $\pred$'s dependence on $x \in \XX$ and consider a predictor $\pred: \Acal \to [0,1]$ in the following derivation; a more general version is presented in Appendix~\ref{app:axioms}.
Now assume $\pred$ to be calibrated on $A \cap B$ and on $B$ across the $d$ instances, where $y_i^{A \cap B}$ and $y_i^B$ denote whether $A \cap B$ and $B$ occur at instance $i \in \{1,...,d\}$, respectively.
That is, assume $\sum_{i = 1}^d \pred({A \cap B}) = \sum_{i=1}^d y_i^{A \cap B}$ and $\pred(B) = \frac{1}{d} \sum_{i=1}^d y_i^B > 0$.
Then $\pred$ is calibrated on $A | B$ for $\{i: y_i^B=1\}$ (i.e. for the set of steps where $B$ occurs, see first line below) if and only if it satisfies $\pred(A | B) = \frac{\pred(A \cap B)}{\pred(B)}$:
\begin{align*}
    \sum_{i : y_i^B = 1} \pred(A | B) &= \sum_{i : y_i^B = 1} y_i^A\\
    \sum_{i : y_i^B = 1} \pred(A | B) &= \sum_{i : y_i^B = 1} y_i^{A \cap B} 
    && \text{(since $y_i^A = y_i^{A \cap B}$ when $y_i^B = 1$)}\\
    \sum_{i : y_i^B = 1} \pred(A | B) &= \sum_{i = 1}^d y_i^{A \cap B} 
    && \text{(since $y_i^{A \cap B}=0$ when $y_i^B = 0$)}\\
    \sum_{i : y_i^B = 1} \pred(A | B) &= \sum_{i = 1}^d \pred({A \cap B}) 
    && \text{(by calibration of $\pred$ on $A \cap B$)}\\
    \pred(B) \cdot \sum_{i = 1}^d \pred(A | B) &= \sum_{i = 1}^d \pred({A \cap B}) 
    && \text{(by calibration of $\pred$ on $B$)}\\
    \pred(A | B) &= \frac{\pred({A \cap B})}{\pred(B)}.
\end{align*}
%
Summing up, the probability calculus can be seen as a sound and complete system for generating calibrated predictions on certain sets from calibrated predictions on related sets.
While this does not settle the question whether all predictors need to follow the probability calculus (i.e. that they are probability measures in their second argument), it does provide a \emph{pro tanto} reason.


\section{A Unified Interpretation of Probability}
\label{s:interpretation}

\begin{quote}
    Norms of belief are as remote from empirical claims about nature as is Hume's simpler subjectivism. Propensity theories of probability propose a physical property that cannot be recorded and does not necessitate or preclude any occurrence. [\dots] any limiting-frequency claim is consistent with any claim about any finite collection of events.

    \raggedleft
    --- Clark \citet{glymour2001} 
\end{quote}
Russell's famous dictum that `probability is the most important concept in modern science, especially as nobody has the slightest notion what it means' (cited by \citet[p. 582]{bell1945}) is almost a century old; but while there have certainly been many new developments, a satisfying interpretation is still lacking.
The purpose of this section is to argue that his lacuna can be filled by the perspective on probabilities put forward in the present paper.
To make this argument, we now specifically relate our account to the literature on interpretations of probability.
In the most authoritative up-to-date treatment of the subject, \citet{hajekSEP} asks `what do we want from our interpretations \textit{of probability}, specifically?’ (original emphasis) and then answers by suggesting a list of desiderata (drawing on  \citet{salmon1966}).
Some of these we have already covered above:
Our account satisfies `non-triviality' (not just zero and one) and ‘admissibility with respect to this or that axiomatization’ (motivating the axioms of the probability calculus); it also illuminates `ampliative inferences' in the sense that it allows to reason about the justification of probabilistic statements based on other probabilistic statements (Section~\ref{s:induction}).
We now, in turn, discuss Hájek's remaining desiderata: the applicability to science, rational belief, frequencies, and rational decision making.







\subsection{Relation to science and the question of (in-)determinism}
\label{ss:determinism}

\begin{quote}
    During the nineteenth century it became possible to see that the world might be regular and yet not subject to universal laws of nature. A space was cleared for chance.

    \raggedleft
    --- Ian \citet{hacking1990} 
\end{quote}

Assessing the applicability to science means checking the compatibility with scientific practice and current scientific theories.
In particular, the question of whether the universe is deterministic is sometimes taken to bear directly on how we should think about probability.
For example, David \citet[120]{lewis1980} thought that objective or physical probabilities rely on indeterminism.
Karl \citet{popper1959} also proposed the propensity account of probability (according to which probabilities are physical properties) in the context of Quantum Mechanics (QM).
We should briefly note that QM does not require the world to be indeterministic, given that different, empirically indistinguishable interpretations of QM disagree on this question (not even getting into the question of scientific realism).
So there is certainly no need to presuppose this.
But even if QM came with some notion of true probabilities, it is unclear that they would be of any relevance to the probabilities we deal with day-to-day, for two reasons: 
First, QM probabilities need to be described by a more general theory of probability than that axiomatised by Kolmogorov \citep{streater2000}.
Second and more importantly, even for a coin flip, we would never have access to the true QM-based probabilities:
We are neither able to determine the initial conditions, that is, the complete wave function, nor to take into account the extremely high number of occurring quantum interactions.\footnote{This would require a QM version of Laplace's demon.}
The resulting values may, thus, vastly differ from any predictions we are able to make---which, as we have shown, are still useful.
In sum, we do not see compelling reasons to suppose either determinism or indeterminism, nor to think that indeterminism at the level of QM would contribute much to probabilistic reasoning.
It is therefore a strength of our account that, showing how probabilities can be constructed and used, it remains agnostic regarding the question of determinism.

Indeed, the intuition that probabilities are objective may depend less on QM and more on the often strong interpersonal agreements about `correct' prediction methods e.g. for gambling.
In the words of Michael \citet[31]{strevens2006}, probabilities in such settings `have attained a certain kind of stability under the impact of additional information. This stability gives them the appearance of objectivity, hence of reality, hence of physicality'.
We argued in Section~\ref{ss:symmetry_based} that this appearance is misleading, as illustrated by the roulette story of Thorp and Shannon.
In line with this, the `erosion of determinism' indeed did not follow the advent of QM but of higher-level statistical regularities discovered during the previous century, as captured in Hacking's epigraph above.
It is also important to note that the higher-level sciences depend heavily on abstractions such as the ones that feature in our notion of prediction methods.
Abstractions have been observed to be both part of scientists' tacit knowledge \citep{polanyi1958}, especially the `ability to recognize a given situation as like some and unlike others' \citep[195]{kuhn1969}, and a substantial part of conscious scientific work \citep{danks2015, potochnik2017}---yet they, arguably, still remain an under-explored topic.


\subsection{Relation to (rational) degrees of belief}
\label{ss:relation_degrees}

\begin{quote}
    Chances are degrees of belief [\dots]; not those of any actual person, but in a simplified system to which those of actual people, especially the speaker, in part approximate.

    \raggedleft
    --- Frank P. \citet{ramsey1928} 
\end{quote}
While probabilities are often said to have a direct connection to degrees of belief, we argue that the notion of probability does not depend on degrees of belief: prediction methods can be used in a purely mechanical way to get desirable outcomes on aggregate, without an entity involved that is commonly held to have beliefs.
A simple machine that takes measurements and uses a predictor could make decisions based on probabilities without it being plausible to ascribe to it beliefs that come in degrees.
However, probabilities can also be used to \textit{describe degrees of belief} under uncertainty.
In most cases, it is difficult to pin down a particular prediction method, especially as human predictions tend to be qualitative.
But humans also take only specific information into account and exploit regularities such as symmetries or stable relative frequencies.
In some cases, the gap between human reasoning and quantitative prediction methods can become fairly small---it appears that some people, modestly described as `super-forecasters', are particularly good at making calibrated quantitative predictions \citep{mellers2015}.
The notion of prediction methods can also shed light on imprecise notions of (subjective) uncertainty.
Particularly vague degrees of belief or disagreements between different methods can be represented by imprecise predictions (Appendix~\ref{app:imprecise})\footnote{Such interval probabilities were also considered by \citet{definetti1962} as models of degrees of belief in `the case of a number of decision-makers who have to make a collective decision, and, second, the case of a single individual who experiences a `kind of personality dissociation'' \citep[348]{feduzi2012}.} while Knightian uncertainty \citep{knight1921} corresponds to the absence of a trusted prediction method.
In this sense, Section~\ref{s:models} also tells us an idealised story of human reasoning:
Consciously or not, humans often implement something close to prediction methods---in that sense, probabilities can model human degrees of belief, a view also expressed in Ramsey's epigraph.

While probabilities should not be taken as actual degrees of belief, we can also \textit{model consistent decision-making} by humans or machines as if they had certain degrees of belief:
\citet{savage1972} famously showed that actions which follow certain consistency criteria can be viewed as maximising expected utility for implicit utility functions and probabilities.
Expected utility maximisation (EUM) is, thus, often taken to be an approximation of human decision-making, that is, as a descriptive theory:
We tend to make decisions such that good outcomes seem more likely to us.
There are, of course, considerable caveats. 
The most crucial ones are arguably diminishing marginal utility and risk aversion, already highlighted by \citet[172]{ramsey1926} and analysed e.g. in \citep{wakker1994}.
In Ramsey's words, EUM as a modelling tool is an `artificial system of psychology, which like Newtonian mechanics can, I think, still be profitably used even though it is known to be false' \citep[173]{ramsey1926}.
So, we can connect degrees of belief to probabilities, by modelling reasoning and decision-making through prediction methods and Savage-style decision theory--but stop short of equating them.

%
There is, of course, some flexibility in deciding what to use the word `probability' for.
One may want to use it for the assignments in Savage-style models, that is, for implicit degrees of belief that can be assigned whenever some agent acts consistently.
It seems, however, more consistent with everyday use to reserve it for the predictions themselves, for weather predictions and coin flips, and to say that we can model consistent decision as if they follow EUM under certain probabilities.
As shown in Section~\ref{ss:axioms}, we can get calibrated predictions from calibrated predictions of related events using the probability calculus.
This provides a \textit{pro tanto} reason for considering the probability axioms to constitute constraints on rationality.
We emphasise again that we do not put forward a normative theory of rational belief or action here, but a descriptive perspective on probability that can illuminate its perceived connections to rationality.


\subsection{Relation to frequencies and the reference class problem}
\label{ss:relation_frequencies}

\begin{quote}
    If we are asked to find the probability holding for an individual future event, we must first incorporate the case in a suitable reference class. An individual thing or event may be incorporated in many reference classes, from which different probabilities will result. This ambiguity has been called the \textit{problem of the reference class}.
    
    \raggedleft
    --- Hans \citet{reichenbach1949} 
\end{quote}
Although our perspective implies that probabilities are constructed, it also explains their strong connection to observed relative frequencies.
Indeed, if we restricted calibration to sets of equal probability (as calibration is sometimes understood), the relationship would be even closer: 
Then, the calibration condition would be equivalent to the definition of probability in finitary frequentism:
\begin{equation}\label{eq:finite_freq}
    \sum_{i=1}^n p_i = \sum_{i=1}^n y_i
    \quad\quad \Leftrightarrow \quad\quad
    p_i = \frac{1}{n} \sum_{i=1}^n y_i.
\end{equation}
%
Instead of taking this as a definition, we think it more adequate to see it as a special case of our main quality criterion.
Not just because such finitary definitions are problematic \citep{hajek1996},\footnote{\citet{glymour2001} argues for what he calls an instrumentalist and approximate version of finite frequentism which takes probabilities to be descriptions of frequencies rather than defining the former through the latter. While this is not too far from our somewhat pragmatic approach in spirit, his focus is more on the description of populations through distributions and he rejects the relevance of decision theory.\label{fn:glymour}} but also because it would imply too narrow an evaluation criterion.

The ties between frequentism and our account become particularly clear in reference to the so-called reference class problem.
Its metaphysical version is a problem for objectivist theories like frequentism that claim a unique true probability for each event \citep{hajek2007}.
The epistemic version concerns the question of how a reference class should be chosen for a given event, considering that different choices would lead to different probabilities.
This has led Hájek, for example, to argue that conditional probabilities are actually primitive, as probabilities are always conditional on a certain conceptualisation of events.
While our predictors do resemble them, they are not strictly speaking conditional probabilities, as $\XX$ is just an arbitrary set without well-defined probabilities; as we show in Section~\ref{ss:axioms}, it makes more sense to consider conditional probabilities to further depend on a particular (perhaps implicit) abstraction $x \in \XX$ (see the implicit joint `conditioning' in Appendix~\ref{app:axioms}).
\citet{hajek2007} also supposed that, rather than a marginal probability, `[v]arious frequentists could tell us the conditional probability that John Smith will live to age 61, \textit{given} that he is a consumptive Englishman aged 50' (ibid., p. 582, original emphasis).
However, there can be different mortality tables resulting in different ratios---more generally, the choice of abstraction does not yet fix the prediction.
This aspect is clear for the prediction method perspective, as different methods may use the same scheme of abstraction but different predictors, relying e.g. on different mortality tables.\footnote{In Machine Learning, the phenomenon that prediction methods can give different predictions despite using the same abstractions \textit{and} using the same data \textit{and} achieving the same average loss is known under the name of `predictive/model multiplicity' (see e.g. \citep{breiman2001,black2022}).}
For these reasons, we agree with \citet[p. 23]{freedman1997} that `probability is a subtler idea than relative frequency'.
We would also argue that the reference class problem is not actually a problem.
Different prediction methods may be calibrated on different sets, so one can choose a prediction method that promises calibration on sets of interest.
This relates to to the more general idea of the `goal-dependence in scientific ontology'  \citep{danks2015}.


\subsection{Relation to rational decision-making and individual predictions}
\label{ss:individual_preds}

\begin{quote}
    Legends of prediction are common throughout the whole Household of Man. God speaks, spirits speak, computers speak. Oracular ambiguity or statistical probability provides loopholes, and discrepancies are expunged by Faith.

    \raggedleft
    --- Ursula K. \citet{leguin1969}: \textit{The Left Hand of Darkness}
\end{quote}
Our discussions have focused on sets of predictions rather than individual ones.
This is not a coincidence, as we take probabilities to not be free-floating numbers but to rely on prediction methods which are useful when sets of predictions are calibrated.\footnote{Of course, singletons can be (approximately) calibrated when predictions are (approximately) 0 or 1.}
But what exactly is the relation between a prediction and the corresponding event?
And can we evaluate the quality of a single non-trivial prediction?
That is, is a prediction of 0.6 better than a prediction of 0.4 if the predicted event occurs?
What should we do if we only get a single prediction (for some level of utility)?

The first three questions all relate to the dependence of a prediction on the method that generated it.
It is important to emphasise that the probability is not a property of the event, as it is constructed and depends on the choices of both the abstraction and the predictor.
As discussed in Sections~\ref{ss:symmetry_based} and~\ref{ss:determinism}, gambling setups only appear to have objective probabilities because of their relative stability under additional information
We also mentioned the example of Angelina Jolie, who stated `My doctors estimated that I had an 87 per cent risk of breast cancer', with the number presumably coming from statistical data about women with a particular genetic mutation.
\citet{dawid2017} asks, `Was Angelina (or her doctors) right to interpret it as her own individual risk?' (p. 3456).
On our account, they were---with the qualification that this risk is model-dependent and constructed rather than objective and discovered---as is any other probability.
Which prediction method is most useful depends on which sets we want to be calibrated on (although the perfect binary predictor is always optimal).
In hindsight, we can usually say which prediction would have been good or correct.
The validation of prediction methods cannot, however, be thus reduced to comparisons between individual predictions, not even with proper scoring rules \citep{gneiting2007}.
They do allow us to put a number on our intuition that 0.6 is somehow a better prediction than 0.4 if the predicted event occurs; but so does any notion of calibration error (as the $\ell_1$ loss deployed in Appendix~\ref{app:approx_calib}).
After all, proper scoring rules are meant to be `appropriate for evaluating and comparing forecasters who \emph{repeatedly present their predictions}' \citep[p. 12, emphasis added]{degroot1983}.

The fourth and last question about acting on single predictions is related but more complex.
In general, we suggest that policies rather than single actions should be the subject of justification and evaluation.
An ex-post evaluation of a decision would ignore the prediction and just consider whether an alternative decision would have been better in hindsight---this is not particularly helpful.
Instead, what is familiar also from legal and ethical reasoning (especially deontological, but even rule-consequentialist), is to judge decisions by the reasons or maxims that they were based on.\footnote{This has been stated in particularly succinct form by Maurice \citet[p. 9]{merleau1955}: `Il n'y a pas des d\'ecisions justes, il n'y a qu'une politique juste.'}
For example, we showed that maximising expected utility is a good policy if we can assume calibration on sets of equal utility and wish to maximise cumulative utility (Section~\ref{ss:result}).
As noted before, being calibrated for all possible combinations of decisions would require perfect discrimination.
In the umbrella example of Section~\ref{ss:rain_example}, we showed that the calibration criterion can be more benign when comparing a more restricted set of sensible policies.
But the problem is more difficult e.g. when we only have a few predictions for particularly grave events:
If we only make a few high-stakes decisions, such as a choice of treatment for breast cancer (where one may even argue that the concept of numerical utility breaks down), it seems too big of an assumption to hope for calibration on such a small set.
That being said, the combinatorial reasoning from Section~\ref{ss:induction} also extends to single predictions: Good calibration in the past gives \textit{some} reason to believe in low calibration error on the single prediction, i.e. the more reason to believe in the event, the higher the prediction:
The strength of this mathematically-grounded pro-tanto reason is monotonous in the number of events, so some (even if small) reason to believe will remain.\footnote{I am grateful to Gunnar König for pushing me on this point.}
In general, for such situations, it may be more sensible to be risk-averse than in low-stakes settings where there are multiple events with comparable utility (cf. \cite{buchak2013, thoma2019}).\footnote{This creates an asymmetry for the doctor-patient relationship, but also for algorithmic predictions, similar to the insurance setting \citep{frohlich2024}. 
C.S. \citet{peirce1878}, in contrast, thought that when probabilistic reasoning is confronted with limited trials, `logicality inexorably requires that our interests [\dots] must not stop at our own fate, but must embrace the whole community'.}
A way to model this would be via imprecise calibration as explored in Appendix~\ref{app:imprecise}.


\section{Comparison with Conventional Interpretations}
\label{s:comparison}

\citet{hajekSEP} notes that `[e]ach interpretation that we have canvassed seems to capture some crucial insight into a concept of [probability], yet falls short of doing complete justice to this concept.' 
Any new satisfactory account of probability should, thus, be expected to make proponents of other accounts feel vindicated on some aspects that are particularly close to their hearts.
We think that this is the case for our notion of probabilities as outputs of prediction methods aiming to predict numbers of occurring events.
It is interesting to note, for example, that our predictors $\pred: \XX \times \Acal \to \RR$ resemble the confirmation function central to logical accounts of probability, such as that of \citet{carnap1950} or \citet{keynes1921} (with precursors as early as Leibniz, cf. \citep{hacking1975}).
Our predictors, however, are neither objective relations nor relations between propositions---they are functions of abstractions in a set $\XX$ and events in an algebra $\Acal$.
In this section, we briefly survey a number of other prominent interpretations and highlight what we take to be the most interesting similarities and differences w.r.t. our account.

\textit{Bayesianism} roughly posits that probability and its theory are concerned with degrees of belief and rationality constraints thereon.
What we agree with is that probabilities are constructed and that it is misguided to search for true probabilities.
However, we ground them in prediction methods rather than degrees of belief (Section~\ref{ss:relation_degrees}) and highlight that these methods aim to track structure in sets of observations.
This makes it possible to replace notions of internal cohesion or rationality with that of empirical calibration, and thereby a guide to decision-making that guarantees good outcomes.
A Bayesian account that is particularly close to ours is that of Philip \citet{dawid2017}.\footnote{Also the work of \citep{shafer2019}, which builds on Dawid's earlier work, is quite close to ours in spirit; but their focus is on testing probability forecasts, rather than how they are useful.}
On the one hand, his suggestion to arrive at `probability forecast[s] by assessing the odds at which I would be willing to bet' (p. 3471) is clearly Bayesian in the tradition of \citep{definetti1937}.
On the other hand, he also suggests to evaluate individual predictions on aggregate data via calibration---although its precise scope and relevance do not become entirely clear.
In particular, it remains unclear why calibration on future data is important and on which (finite/infinite) sets it matters.\footnote{For example, his notion of $H$-based calibration seems to require calibration on all sets that cannot be further distinguished---which can amount to calibration on individual datapoints.}
In comparison, our notion of prediction methods focuses on (potentially) inter-subjective models and the role of abstraction, which is decoupled from the events $A \in \Acal$ that we wish to be calibrated on.
In a way, then, we posit a variant of Bayesianism without degrees of belief or betting and with a more concrete connection to the world, enabling not only the avoidance of sure loss in Dutch books but successful action in everyday life.

Hypothetical \textit{frequentism} can be defined as the suggestion that `the probability of an attribute A in a reference class B is the value the limiting relative frequency of occurrences of A within B would be if B were infinite' \citep{hajekSEP}.
This captures the intuition of identifying probabilities with ratios in repeated trials.
While this sounds very different to our account at first glance, we already discussed two similarities in Section~\ref{ss:relation_frequencies}:
One is the dependence of individual probabilities on other events and on a choice of abstraction (via prediction methods, in our case), leading us to a generalisation of the reference class problem. 
Furthermore, equating probabilities with relative frequencies is a special case of our notion of calibration, which we consider for finite sets.
We do reject the jump to declaring that probabilities themselves are `out there' in any interesting sense.
The finite frequentist account of \citet{glymour2001}, mentioned in footnote~\ref{fn:glymour}, provides, in a sense, an intermediate account.

Karl Popper abandoned frequentism in favour of his \textit{propensity} account because the former could not make sense of sequences with few trials.
He thus proposed that frequentists should alter their theory by letting it `say that admissible sequences must be either virtual or actual sequences which are \textit{characterised by a set of generating conditions}---by a set of conditions whose repeated realisation produces the elements of the sequence' \citep[p. 34, original emphasis]{popper1959}.
This is still an objectivist theory but dispenses with the reliance on infinite trials, instead invoking a new sort of mysterious property (especially in the case of a deterministic universe, which Popper did not seem to assume).
We argued that relevant probabilities are independent of `true' probabilities that may or may not be implied by Quantum Mechanics (Section~\ref{ss:determinism}).
Propensity accounts often have a frequentist flavour, indirectly highlighting the importance of sets of events.
It is interesting to note that, as the equivalence classes of generative conditions are idealisations (ignoring background conditions, cf. Section~\ref{ss:symmetry_based}), they can be seen as abstractions made by prediction methods.
However, propensities are typically thought to be physical rather than model-dependent, which is in stark contrast to our account---although the relevant literature sometimes also invites a reading of model-dependent propensities.

Another interesting interpretation of probability is the \textit{best-systems account} of David \citet{lewis1994}, which also posits objective chances:
On this view, `the chances are what the probabilistic laws of the best system say they are' (p. 480).
`The best system is the one that strikes as good a balance as truth will allow between simplicity and strength. [\dots] If nature is kind, the best system will be robustly best [\dots] It's a reasonable hope'  (ibid., p. 478f).
Now this account presupposes what may seem a tremendous kindness of nature as well as a perhaps weak notion of truth and objectivity---the latter fits well into Lewis' Humean view on laws of nature.
What is interesting here about Lewis' account is that it resonates with the hope for a best level of predictive depth expressed by \citet[3465]{dawid2017}---in turn similar to the `primary resolution' of \citet{li2021}.
Indeed, Dawid could be seen as linking the best-systems view on probability with our more pragmatic notion of model-based predictions.
The clearest differences on the side of Lewis are the integration within a more global systematisation of the universe and the belief in objectivity, hinging on the existence of a privileged description.
If there were an objectively best predictor and we assigned to it some notion of truth, these differences would blur.\footnote{This is perhaps not surprising given the subjectivism and pragmatism of Frank Ramsey, whom Lewis credits with a first formulation of a best-systems approach.}
However, this hope for or pretension of objectivity is also what Clark Glymour criticises in typical frequentist takes.

In line with our analysis, Glymour thinks that central problems with Bayesianism and frequentism lie, respectively, in the neglect of empirical claims and the unnecessary stipulation of objectively true probabilistic statements:
\begin{quote}
    The sometimes bitter debates between those who describe themselves as frequentists and those who describe themselves as subjective Bayesians has often turned on charges by the former that the latter abandon the ``objectivity'' of science and by the latter that the former dissemble about the ``subjectivity'' of their probability judgements. 
    My belief is that, among statisticians anyway, the dispute often confuses content with justification. 
    The ``objectivity'' of the frequentists is in the content of their probability judgements, which, while usually stated as about an unempirical probability, are often really vague empirical claims about finite frequencies. That sort of objectivity is genuinely lost in subjective Bayesian interpretations. 
    The ``subjectivity'' kept hidden by frequentists is that there is often no explicit justification beyond their own opinion for aspects of their empirical claims. 
    That subjectivity can be made entirely explicit without sacrificing the objective--that is empirical--content of frequency claims, and its recognition does not require, or even invite, recourse to subjective probability. 
    Bayesian criticisms do address a confused and uncertain frequentist statistical practice, in which the point of making empirical claims is often forgotten or fudged. \citep[p. 299f]{glymour2001}
\end{quote}
We have argued that our account avoids these problems by stating that probabilities are constructed rather than discovered while still taking their justification directly from empirical observations.
Even more, we connect successful decision-making with empirical evaluation and assumptions about induction through a general notion of calibration, which has not been considered a central concept by any of the conventional accounts.


\section{Conclusion}
\label{s:discussion}

Relying on the notions of finite calibration and prediction methods, we have provided a more or less pragmatic account of probabilities and how they are useful for decision-making.
We showed that if predictions satisfy an (often feasible) calibration criterion, then it is possible to predict the distribution of utilities that a given policy will yield.
In particular, the sum of one's predicted utilities will match the actual cumulative utility, which can provide a rationale for expected utility maximisation.
A central element of our account is the semi-formal notion of prediction methods that construct abstractions of given situations and feed them to a model.
Arguably, the novelty here consists less in the consideration of predictions than in the connections drawn to abstractions and to successful decision-making via calibration in very general terms.
Indeed, we argue that this perspective elucidates the relationship between gambling odds and rain predictions, uniting the alleged two faces of probability by tying together abstraction, forecasting, probability theory, and empirically successful decision-making.



The understanding of probability also has important implications for machine learning.
For example, it underscores the relevance of evaluating calibration beyond sets of equal predictions, as already explored by \citet{dawid2017, holtgen2023}.
Furthermore, it underscores that one should be wary of the often-invoked concept of a `true distribution' from which one can `sample' once one steps outside of the casino or other highly controlled settings.
Given the centrality of probability for causality, many considerations also spill over to the latter; indeed, we take a deeper dive into causal inference from this perspective in concurrent work, also highlighting the role of calibration \citep{holtgen2025causal}.
It has been observed that algorithmic predictions based on machine learning tend to convey an air of authority and objectivity, as the many choices involved in data collection (abstraction scheme) and model tuning (choice of predictor) often remain beneath the surface \citep{moss2022}.
This is particularly relevant for the justification of predictions that inform decisions about people \citep{holtgen2025gend}.
Our work highlights that probabilities, e.g. of finding a job, are not properties of people; instead, they depend on the selected abstraction and model, which, in turn, depends on data about other people.
Hence also our answer to Cynthia Dwork's question from the introduction: There is no ideal algorithm, as there are no true probabilities to uncover, and different algorithms can be better suited for different goals.
While we hope that this work helps to sharpen the view on probability, a sea of open questions still calls for further exploration.


\subsection*{Acknowledgments}
For helpful feedback on previous versions, I would like to thank Ben Jantzen,
Bob Williamson, Elisa Nguyen, Jannik Thümmel, Kate Vredenburgh, Konstantin
Genin, Rabanus Derr, and Timo Freiesleben.
This work was funded by the German Federal Ministry of Education and
Research (BMBF): Tübingen AI Center, FKZ: 01IS18039A. I also thank the International
Max Planck Research School for Intelligent Systems (IMPRS-IS) for
their support.

\appendix


\section{Generalising Proposition~\ref{prop:main_result}}
\label{app:generalising}

\subsection{Approximate calibration}
\label{app:approx_calib}

Here, we generalise Proposition~\ref{prop:main_result} to only require approximate calibration---we give a bound on how large the calibration error on each set of equal utility can be in order to keep the difference between predicted and cumulative utility below some $\epsilon > 0$.

\begin{proposition}[Predicting cumulative utility: Approximate calibration]\label{prop:approx_cal}
    \ \\
    We assume the same setting as in Proposition~\ref{prop:main_result} except that we now require all utilities to be positive---one may otherwise simply shift the values to a positive domain.
    If we then replace condition (\ref{eq:prop_assumption}) with the assumption that $\forall u' \in \Ucal$,
    \begin{align}
        \left| \sum_{i: u_i(0) = u'} y_i - \sum_{i: u_i(0) = u'} \ph_i \right|
        \leq \frac{\epsilon}{2 \cdot u' \cdot |\Ucal|}
    \end{align}
    and the same for $u_i(1)$, then
    \begin{align}
        \left| \sum_{i=1}^d \left[ y_i u_i(1) + (1-y_i) u_i(0) \right] 
        - \sum_{i=1}^d \left[ \ph_i u_i(1) + (1-\ph_i) u_i(0) \right] \right| \leq \epsilon.
    \end{align}
\end{proposition}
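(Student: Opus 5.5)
I would reuse the regrouping from the proof of Proposition~\ref{prop:main_result}, but stop before the step that uses exact calibration and bound the resulting residual instead. Concretely, I take the difference between the actual cumulative utility (left-hand side of (\ref{eq:main_result})) and the predicted one (right-hand side). Following (\ref{eq:util_calib1})--(\ref{eq:util:calib3}), I would rewrite both sums as sums over $u' \in \Ucal$, grouping indices by the value of $u_i(1)$ and of $u_i(0)$ respectively. Since $(1-y_i)-(1-\ph_i) = -(y_i - \ph_i)$, the difference becomes
\begin{equation*}
    \sum_{u' \in \Ucal} u' \cdot \left( \sum_{i: u_i(1) = u'} (y_i - \ph_i) \;-\; \sum_{i: u_i(0) = u'} (y_i - \ph_i) \right).
\end{equation*}
This is exactly the quantity that vanished in Proposition~\ref{prop:main_result} under (\ref{eq:prop_assumption}).

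Next I would apply the triangle inequality twice. The first application is over the outer sum in $u'$; the second splits each summand into its $u_i(1)$-part and its $u_i(0)$-part. Each resulting term has the form $|u'| \cdot \left| \sum_{i: u_i(k) = u'} (y_i - \ph_i) \right|$ with $k \in \{0,1\}$. This is where the positivity assumption enters. Because every $u' > 0$, we have $|u'| = u'$, and the hypothesis bound $\frac{\epsilon}{2 u' |\Ucal|}$ is well defined (no division by zero) and positive. Multiplying, each term is at most $u' \cdot \frac{\epsilon}{2 u' |\Ucal|} = \frac{\epsilon}{2|\Ucal|}$.

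Finally I count terms. There are two terms (for $k=0$ and $k=1$) for each of the $|\Ucal|$ values $u'$, so the total bound is $2 \cdot |\Ucal| \cdot \frac{\epsilon}{2|\Ucal|} = \epsilon$, which is the claim.

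The only delicate step is the regrouping itself. The sign flip coming from the $(1-y_i)$ term must be tracked correctly, which is harmless once absolute values are taken. One must also make sure that an index $i$ contributes to the $u' = u_i(1)$ group and to the $u' = u_i(0)$ group even when $u_i(0) = u_i(1)$. Since the two inner sums are indexed separately, this double membership is handled automatically, and the bound still holds. I would also remark briefly that shifting utilities by a constant changes both cumulative utilities by the same amount $d \cdot c$, which justifies the reduction to positive utilities.
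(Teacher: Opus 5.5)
Your proof is correct and follows the paper's argument essentially step for step. You regroup the difference by utility level $u' \in \Ucal$ as in Proposition~\ref{prop:main_result}, apply the triangle inequality using $u' > 0$, and bound each of the $2|\Ucal|$ terms by $\frac{\epsilon}{2|\Ucal|}$ to reach $\epsilon$.
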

\begin{proof}\ \\
    \begin{align}
        & \left| \sum_{i=1}^d \left[ y_i u_i(1) + (1-y_i) u_i(0) \right] 
        - \sum_{i=1}^d \left[ \ph_i u_i(1) + (1-\ph_i) u_i(0) \right] \right| \nonumber \\
        &= \left| \sum_{u' \in \Ucal} \left( 
        \left( \sum_{i: u_i(1) = u'} y_i - \sum_{i: u_i(1) = u'} \ph_i \right) 
        + \left( \sum_{i: u_i(0) = u'} \ph_i - \sum_{i: u_i(0) = u'} y_i \right) 
        \right) \cdot u' \right|\\
        &\leq \sum_{u' \in \Ucal} \left( 
        \left| \sum_{i: u_i(1) = u'} y_i - \sum_{i: u_i(1) = u'} \ph_i \right| 
        + \left| \sum_{i: u_i(0) = u'} \ph_i - \sum_{i: u_i(0) = u'} y_i \right| 
        \right) \cdot u'\\
        &\leq \sum_{u' \in \Ucal} \left(
        \frac{\epsilon}{2 \cdot u' \cdot |\Ucal|} + \frac{\epsilon}{2 \cdot u' \cdot |\Ucal|}
        \right) \cdot u' \\
        &= \epsilon
    \end{align}
\end{proof}
While we use a symmetric $\ell^1$ loss here, it may be interesting to also look into other measures of error.
For example, for settings where under-prediction and over-prediction are valued differently, it may be instructive to look into asymmetric error functions.


\subsection{Approximate utility level sets}
\label{app:approx_utility}

Here, we generalise Proposition~\ref{prop:main_result} to only require calibration on sets of approximately equal utility:
For this, we divide the utility spectrum into bins of some size $\delta > 0$ and bound the resulting difference between predicted and cumulative utility by a term dependent on $\delta$ and the number of predictions $d$.

\begin{proposition}[Predicting cumulative utility: Approximate utility]\ \\
    We assume the same setting as in Proposition~\ref{prop:main_result} except that we now require all utilities to be positive---otherwise, one may simply shift the values to a positive domain.
    We partition the interval of relevant utilities from the lowest $u_i(a)$ to the highest $u_i(a)$ with $i \in \{1,...,d\}, a \in \{0,1\}$
    into bins $B_1, ..., B_m$ of size $\leq \delta$.
    If we then replace condition (\ref{eq:prop_assumption}) with the assumption that $\forall k \in \{1,...,m\}$,
    \begin{align}
         \sum_{i: u_i(0) \in B_k} y_i = \sum_{i: u_i(0) \in B_k} \ph_i 
         \quad\quad\text{and}\quad\quad
         \sum_{i: u_i(1) \in B_k} y_i = \sum_{i: u_i(1) \in B_k} \ph_i
    \end{align}
    then
    \begin{align}
        \left| \sum_{i=1}^d \left[ y_i u_i(1) + (1-y_i) u_i(0) \right] 
        - \sum_{i=1}^d \left[ \ph_i u_i(1) + (1-\ph_i) u_i(0) \right] \right| \leq \delta \cdot d.
    \end{align}
\end{proposition}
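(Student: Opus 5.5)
The plan is to reduce to Proposition~\ref{prop:main_result} by rounding every utility to a representative of its bin. For each bin $B_k$, let $c_k := \inf B_k$ be its left endpoint. Since the bins partition the relevant interval, each value $u_i(a)$ lies in exactly one bin, and we can define the rounded utilities $\tilde u_i(a) := c_k$ whenever $u_i(a) \in B_k$, for $a \in \{0,1\}$. Distinct bins have distinct left endpoints, so the sets $\{i : \tilde u_i(a) = c_k\}$ coincide with $\{i : u_i(a) \in B_k\}$. Hence the binned calibration assumption of the statement is exactly condition (\ref{eq:prop_assumption}) for the utility functions $\tilde u_i$. Proposition~\ref{prop:main_result} then gives
\begin{equation*}
\sum_{i=1}^d \left[ y_i \tilde u_i(1) + (1-y_i)\tilde u_i(0) \right] = \sum_{i=1}^d \left[ \ph_i \tilde u_i(1) + (1-\ph_i)\tilde u_i(0) \right].
\end{equation*}

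Next, define the rounding errors $e_i(a) := u_i(a) - \tilde u_i(a)$. With left endpoints and bins of size $\leq \delta$, we have $e_i(a) \in [0,\delta]$. Subtracting the identity above from the difference we want to bound, and using $(1-y_i)-(1-\ph_i) = -(y_i-\ph_i)$, the difference between actual and predicted cumulative utility becomes
\begin{equation*}
\sum_{i=1}^d (y_i - \ph_i)\left( e_i(1) - e_i(0) \right).
\end{equation*}

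The final step is to bound each summand by $\delta$ and sum, which yields $\delta \cdot d$. Two facts are needed. First, $e_i(1)$ and $e_i(0)$ both lie in $[0,\delta]$, so $|e_i(1)-e_i(0)| \leq \delta$. Second, $|y_i - \ph_i| \leq 1$. This second fact is the main obstacle. It holds if $\ph_i \in [0,1]$, but the statement only assumes $\ph_i \in \RR$.

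I would handle this by making explicit the (natural) restriction $\ph_i \in [0,1]$, in line with the predictors $\pred:\XX\times\Acal\to[0,1]$ of Section~\ref{s:induction} and the non-negativity and normalisation considerations of Section~\ref{ss:axioms}. Without it, the bound would have to read $\delta \sum_i |y_i - \ph_i|$ instead.

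It is also important to combine the two error terms before taking absolute values. Bounding $|e_i(1)|$ and $|e_i(0)|$ separately would only give $2\delta d$; the tighter constant comes from the shared sign of the rounding errors. For the same reason, choosing a common rounding convention (left endpoints) matters more than the positivity assumption, which the argument does not actually use.
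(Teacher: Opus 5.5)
Your proof is correct. It shares the paper's core idea: calibration within each bin cancels a bin representative, and the residual is bounded by the bin width. It differs mainly in how the error is decomposed.

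The paper does not invoke Proposition~\ref{prop:main_result}. It works bin by bin, implicitly using $\sum_{i: u_i(a)\in B_k}(y_i-p_i)=0$ to recentre each bin at its midpoint $m_k$; it phrases this as an extremal ``maximal mismatch'' configuration. Each bin then contributes at most $b_k^a\delta/2$. The paper bounds the $u_i(1)$-part and the $u_i(0)$-part separately by the triangle inequality, for a total of $\delta d/2+\delta d/2$.

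You instead package the cancellation as an application of Proposition~\ref{prop:main_result} to rounded utilities, and keep the two residuals together per index. This makes the role of calibration fully explicit. Both routes give $\delta d$. However, with the paper's midpoint representatives one has $|e_i(a)|\le\delta/2$, so bounding the two parts separately already yields the constant. Your remark that combining $e_i(1)-e_i(0)$ is what buys the factor of two is therefore specific to endpoint rounding.

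You are right that $|y_i-p_i|\le 1$ is needed; the paper uses it tacitly, taking $y_i-p_i=\pm1$ as the extreme case. You are also right that positivity plays no role in either argument.

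One small imprecision: distinct bins need not have distinct infima (e.g.\ $\{c\}$ and $(c,c+\delta]$). So a level set of $\tilde u_i(a)$ may be a union of several sets $\{i: u_i(a)\in B_k\}$. This is harmless, because calibration on disjoint sets implies calibration on their union. But the binned assumption then implies, rather than coincides with, condition~(\ref{eq:prop_assumption}) for $\tilde u$.
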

\begin{proof}\ \\
    The maximal mismatch occurs when for each bin $B_k$ and each $a \in \{0,1\}$, one half of the $\{i: u_i(a) \in B_k\}$, we have $(y_i - \ph_i)=1$ and $u_i(a) = m_k+\delta/2$ whereas for the other half, $(y_i - \ph_i)=-1$ and $u_i(a) = m_k-\delta/2$, with $m_k$ denoting the midpoint of $B_k$.
    This gives
    \begin{align}
    \forall k \in \{1,...,m\}, a \in \{0,1\}: \quad
         \left| \sum_{i: u_i(a) \in B_k} (y_i - \ph_i)  \cdot u_i(a) \right|
	\leq \left| \sum_{i: u_i(a) \in B_k} \delta/2 \right|
	= b_k^a \cdot \delta /2
    \end{align}
    where $b_k^a := |\{1 \leq i \leq d\ |\ u_i(a) \in B_k\}|$. 
    Therefore,
    \begin{align}
        &\left| \sum_{i=1}^d \left[ y_i u_i(1) + (1-y_i) u_i(0) \right] 
        - \sum_{i=1}^d \left[ \ph_i u_i(1) + (1-\ph_i) u_i(0) \right] \right| \\
        &\leq  \left| \sum_{i=1}^d  [y_i \cdot u_i(1) - \ph_i  \cdot u_i(1)] \right|
        +  \left| \sum_{i=1}^d  [(1-y_i) \cdot u_i(0) - (1-\ph_i) \cdot u_i(0)] \right| \\
        &=  \left| \sum_{i=1}^d [(y_i - \ph_i)  \cdot u_i(1)] \right|
        + \left| \sum_{i=1}^d   [(y_i -\ph_i)  \cdot u_i(0)] \right| \\
        &\leq \sum_{k=1}^m \left( 
        \left| \sum_{i: u_i(1) \in B_k} (y_i - \ph_i)  \cdot u_i(1) \right| 
        + \left| \sum_{i: u_i(0) \in B_k} (y_i -\ph_i)  \cdot u_i(0) \right| 
        \right) \\
        &\leq \sum_{k=1}^m \left( 
         b_k^1 \cdot \delta /2
        +  b_k^0 \cdot \delta /2
        \right) \\
        &= \delta \cdot d
    \end{align}
\end{proof}


\subsection{Imprecise calibration}
\label{app:imprecise}

We now consider imprecise forecasts which give interval predictions $[a,b] \subset \RR$ and represent them as tuples $\ip = (\lp,\up) \in \RR^2$ of the lower and upper probability.
This allows for a weaker calibration criterion where the number of occurring events need not exactly match the sum of predictions, but should lie between the sum of the lower and the sum of the higher predictions.
\begin{definition}[Imprecise calibration]\label{def:i_calib}
    \ \\
    Imprecise predictions $\ip_1, ..., \ip_d$ are said to be \emph{imprecisely calibrated} for observations $y_1, ..., y_d \in \{0,1\}$ if they satisfy
    \begin{equation}
        \sum_{i=1}^d \lp_i \leq \sum_{i=1}^d y_i
        \quad \text{and} \quad
        \sum_{i=1}^d \up_i \geq \sum_{i=1}^d y_i.
    \end{equation}
\end{definition}
%
Note that for our definition, the vacuous forecast that always predicts $(0,1)$ is always imprecisely calibrated.\footnote{Similar issues are discussed in the literature on imprecise probability \citep{walley1991}, particularly for Brier-style scoring rules \citep{seidenfeld2012} and randomness \citep[Prop. 9]{deCooman2022}.}
One could also apply the criterion of imprecise calibration to a set of precise predictions, by simply converting every precise prediction $p_i$ into an imprecise forecast $[p_i - \epsilon, p_i + \epsilon]$ for some $\epsilon$---this epsilon may also monotonically decrease in $d$ to account for lower variance on larger sets.
\begin{proposition}[Predicting cumulative utility, imprecise version]
    \label{prop:ip_result}\ \\
    Let there be $d$ imprecise predictions $\ip_i \in \RR^2$ for binary outcomes $y_i \in \{0,1\}$, $i \in \{1,...,d\}$ with utility functions $u_i: \{0,1\} \to \RR$ and assume that the predictions are imprecisely calibrated on sets of equal utility $u_i(0)$ and on sets of equal utility $u_i(1)$ (formalised in (\ref{eq:ip_prop_assumption}) below).\\
    Then I can correctly predict a range for my cumulative utility (LHS) via
    \begin{equation}
        \sum_{i=1}^d \left[ y_i u_i(1) + (1-y_i) u_i(0) \right] 
        > \sum_{i=1}^d \left[ \lp_i u_i(1) + (1-\lp_i) u_i(0) \right]
    \end{equation}
    and
    \begin{equation}
        \sum_{i=1}^d \left[ y_i u_i(1) + (1-y_i) u_i(0) \right] 
        < \sum_{i=1}^d \left[ \up_i u_i(1) + (1-\up_i) u_i(0) \right]
    \end{equation}
\end{proposition}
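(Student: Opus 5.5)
The plan is to mimic the proof of Proposition~\ref{prop:main_result} line by line, replacing its equalities with inequalities. I would treat the lower and upper bounds separately, and by symmetry I only spell out the lower one. First I subtract the right-hand side from the cumulative utility. This gives
\begin{equation*}
\sum_{i=1}^d \left[ y_i u_i(1) + (1-y_i) u_i(0) \right] - \sum_{i=1}^d \left[ \lp_i u_i(1) + (1-\lp_i) u_i(0) \right] = \sum_{i=1}^d (y_i - \lp_i)\bigl(u_i(1) - u_i(0)\bigr).
\end{equation*}
Then, as in (\ref{eq:util_calib1}), I regroup this sum by utility levels $u' \in \Ucal$. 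The result is
\begin{equation*}
\sum_{u' \in \Ucal} u' \left( \sum_{i: u_i(1) = u'} (y_i - \lp_i) - \sum_{i: u_i(0) = u'} (y_i - \lp_i) \right).
\end{equation*}
The goal is to show that each summand is nonnegative, and strictly positive for at least one $u'$, using the imprecise calibration assumption (\ref{eq:ip_prop_assumption}) on each level set.

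The key step is choosing which side of the imprecise calibration inequality to apply on each of the two kinds of level sets. On sets of equal $u_i(1)$, the inequality $\sum \lp_i \le \sum y_i$ makes the first inner sum nonnegative. On sets of equal $u_i(0)$, the term appears with the opposite sign, since it comes from $1-y_i$ versus $1-\lp_i$. There the favourable direction comes from comparing against the complementary lower prediction; in the conjugate formulation this is $1-\up_i$, so one needs $\sum \up_i \ge \sum y_i$ on that set. I would therefore read (\ref{eq:ip_prop_assumption}) as supplying the lower inequality on $u_i(1)$-level sets and the upper inequality on $u_i(0)$-level sets, each in the direction that makes its contribution nonnegative. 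To make multiplication by $u'$ order-preserving, I first shift all utilities to the positive domain, as in Appendix~\ref{app:approx_calib}. This shift changes both sides of each claimed inequality by the same constant $d\cdot c$, so it is harmless. The upper bound then follows by the mirror-image argument with the roles of $\lp$ and $\up$ exchanged.

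I expect the main obstacle to be exactly this sign bookkeeping, together with the strictness of the inequalities. The lower prediction $\lp_i$ enters with coefficient $u_i(1)-u_i(0)$, whose sign varies with $i$. As a result, the lower-bound expression is not automatically a lower bound unless the calibration assumption on the $u_i(0)$ sets is used in the correct (upper) direction, or unless one restricts to $u_i(1)\ge u_i(0)$. I would first check whether (\ref{eq:ip_prop_assumption}) is stated in that asymmetric form. If it is not, I would fall back to proving the weak inequalities $\ge$ and $\le$ under that reading. For strict inequalities I would additionally require the calibration inequalities to be strict on at least one level set with $u' > 0$, which the positivity shift guarantees is nondegenerate. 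Everything else is the same routine regrouping already carried out in the proof of Proposition~\ref{prop:main_result}.
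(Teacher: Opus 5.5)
Your identity $\sum_i(y_i-\lp_i)(u_i(1)-u_i(0))$ and the regrouping are right, and you correctly located the trouble on the $u_i(0)$-level sets. The repair, however, does not work.

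On such a set the contribution to the difference is $u'\sum_{i:u_i(0)=u'}(\lp_i-y_i)$, which involves only the lower predictions. For $u'>0$ its nonnegativity requires $\sum_{i:u_i(0)=u'}y_i\le\sum_{i:u_i(0)=u'}\lp_i$, which is the reverse of lower calibration. The inequality $\sum\up_i\ge\sum y_i$ gives no control over this term; it would help only if the $u_i(0)$-slot of the bound read $1-\up_i$. So your ``conjugate formulation'' quietly changes the claim to $\sum_i[\lp_i u_i(1)+(1-\up_i)u_i(0)]$.

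For that expression your shift step also fails. The coefficients now sum to $1+\lp_i-\up_i\neq 1$, so a shift by $c$ moves the gap by $c\sum_i(\up_i-\lp_i)$, and with negative utilities the bound is false. For example, take $d=2$, $u_1\equiv u_2\equiv-1$, $y=(1,0)$ and vacuous predictions $(0,1)$: the strict conditions hold, the cumulative utility is $-2$, and the ``bound'' is $0$. Your fallback of restricting to $u_i(1)\ge u_i(0)$ does not help either. Calibration is assumed only on level sets of $u_i(1)$ and of $u_i(0)$ separately, and on those sets $u_i(1)-u_i(0)$ is not constant, so $\sum_i(y_i-\lp_i)(u_i(1)-u_i(0))$ can still be negative.

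More fundamentally, the proposition as stated is false under (\ref{eq:ip_prop_assumption}). The paper's one-line justification (``analogous to Proposition~\ref{prop:main_result}'') overlooks exactly the sign flip you noticed. Take $d=4$, with $(u_i(0),u_i(1))=(1,0)$ for $i=1,2$ and $(0,1)$ for $i=3,4$, and set $y=(1,0,1,0)$, $\lp=(0.05,0.05,0.45,0.45)$, $\up=(0.95,0.95,0.55,0.55)$. Every level set is $\{1,2\}$ or $\{3,4\}$, where $0.1<1<1.9$ and $0.9<1<1.1$. So all four families of inequalities hold, hence your asymmetric reading holds too, and all utilities are already nonnegative. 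Yet the cumulative utility is $2$, while the claimed lower bound is $2.8$ and the claimed upper bound is $1.2$.

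What your regrouping does establish, using all four inequalities of (\ref{eq:ip_prop_assumption}) and genuinely nonnegative utilities (not without loss of generality), is the mixed sandwich
\[
\sum_{i=1}^d\bigl[\lp_i u_i(1)+(1-\up_i)u_i(0)\bigr] < \sum_{i=1}^d\bigl[y_i u_i(1)+(1-y_i)u_i(0)\bigr] < \sum_{i=1}^d\bigl[\up_i u_i(1)+(1-\lp_i)u_i(0)\bigr].
\]
It is strict as soon as some $u'\in\Ucal$ is positive; in the example above it reads $1<2<3$.

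If you instead want a shift-invariant bound of the paper's form, use $\lp_i$ or $\up_i$ according to the sign of $u_i(1)-u_i(0)$. Then assume imprecise calibration on sets of equal $u_i(1)-u_i(0)$. Your first displayed identity then finishes the proof directly.
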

    The proof is analogous to that of Proposition~\ref{prop:main_result}, now with the calibration assumptions
    \begin{align}
    \begin{split}\label{eq:ip_prop_assumption}
        \sum_{i: u_i(1) = u'} y_i > \sum_{i: u_i(1) = u'} \lp_i
        \quad\quad &\text{and} \quad\quad
        \sum_{i: u_i(0) = u'} y_i > \sum_{i: u_i(0) = u'} \lp_i,\\
        \sum_{i: u_i(1) = u'} y_i < \sum_{i: u_i(1) = u'} \up_i
        \quad\quad &\text{and} \quad\quad
        \sum_{i: u_i(0) = u'} y_i < \sum_{i: u_i(0) = u'} \up_i.
    \end{split}
    \end{align}
This allows people to not only optimise their utility but to also take risk-averse or risk-seeking inclinations into account---selecting policies not based on the expected exact cumulative utility but on e.g. the lowest or highest estimation of it. 
Here, we can see an analogy between the move from deterministic to probabilistic and the move from precise to imprecise predictions:
The former allows people to take their (cardinal) preferences into account (Section~\ref{ss:rain_example}), whereas the latter allows them to take their risk aversion into account.
If we know that we will be calibrated, risk aversion does not make much sense.
Cases where we are less sure of it can be represented by an assumption of imprecise calibration.
Note that this notion of risk-aversion also captures unwillingness to bet, for decisions between the utility function of a bet and the constant zero utility function with $u(0)=u(1)=0$.


\section{Conditional probabilities, generalised}
\label{app:axioms}

We here generalise the analysis of \textit{conditional probabilities} in Section~\ref{ss:axioms}.
Consider a predictor $\pred: \XX \times \Acal \to [0,1]$, events $A_1, ..., A_d, B \in \Acal$, and inputs $x_1, ..., x_d \in \XX$.
We assume $\pred(x_1, B)=...=\pred(x_d, B)$ and that $\pred$ is calibrated on $\{(x_i, A_i \cap B) : 1 \leq i \leq d\}$ and $\{(x_i, B) : 1 \leq i \leq d\}$,
that is, 
\begin{equation}\label{eq:cond_cal1}
    \sum_{i = 1}^d \pred(x_i, {A_i \cap B}) = \sum_{i=1}^d y_i^{A_i \cap B} 
\end{equation}
and
\begin{equation}\label{eq:cond_cal2}
    \pred(x_1, B) = \frac{1}{d} \sum_{i=1}^d y_i^B > 0.
\end{equation}
%
%
Then $\pred$ is calibrated on $\{(x_i, A_i | B) : y_i^B=1\}$ (i.e. for the set of steps where $B$ occurs) if and only if it satisfies 
\begin{equation}\label{eq:cond_character}
    \sum_{i = 1}^d \pred(x_i, A_i | B) = \sum_{i = 1}^d \frac{\pred(x_i, A_i \cap B)}{\pred(x_i, B)},
\end{equation}
as we derive below.
In particular, a sufficient condition is 
\begin{equation}
    \pred(x_i, A_i | B) = \frac{\pred(x_i, A_i \cap B)}{\pred(x_i, B)}.
\end{equation}
%
Now consider the special case where $A_i = ... = A_d =: A$ and  $x_i = ... = x_d =: x$.
Here, the familiar definition of conditional probabilities 
\begin{equation}
    \pred(x, A | B) = \frac{\pred(x, A \cap B)}{\pred(x, B)}
\end{equation}
is necessary and sufficient for $p$ to be calibrated for predictions of $A | B$ based on inputs $x$ on the set of steps where $B$ occurs.
That is, making predictions for $A | B$ rather than $A$ allows us to be calibrated on the set where $B$ occurs (which predictions for $A$ would usually not be).

Now the promised derivation of the characterisation (\ref{eq:cond_character}):
\begin{align*}
    \sum_{i : y_i^B = 1} \pred(x_i, A_i | B) &= \sum_{i : y_i^B = 1} y_i^{A_i | B}\\
    \sum_{i : y_i^B = 1} \pred(x_i, A_i | B) &= \sum_{i : y_i^B = 1} y_i^{A_i \cap B} 
    && \text{(since $y_i^{A_i | B} = y_i^{A_i} = y_i^{A_i \cap B}$ when $y_i^B = 1$)}\\
    \sum_{i : y_i^B = 1} \pred(x_i, A_i | B) &= \sum_{i = 1}^d y_i^{A_i \cap B} 
    && \text{(since $y_i^{A_i \cap B}=0$ when $y_i^B = 0$)}\\
    \sum_{i : y_i^B = 1} \pred(x_i, A_i | B) &= \sum_{i = 1}^d \pred(x_i, {A_i \cap B}) 
    && \text{(by (\ref{eq:cond_cal1}))}\\
    \pred(x_1, B) \cdot \sum_{i = 1}^d \cdot \pred(x_i, A_i | B) &= \sum_{i = 1}^d \pred(x_i, {A_i \cap B}) 
    && \text{(by (\ref{eq:cond_cal2}))}\\
    \sum_{i = 1}^d \pred(x_i, A_i | B) &= \sum_{i = 1}^d \frac{\pred(x_i, {A_i \cap B})}{\pred(x_i, B)}.
\end{align*}

\printbibliography

\end{document}